\newtheorem{definition}{Definition}
\newtheorem{theorem}{Theorem} 
\newtheorem{assumption}{Assumption} 
\newtheorem{lemma}{Lemma}
\begin{document}

\title{Enhancing Federated Learning with Adaptive Differential Privacy and Priority-Based Aggregation}

\author{\IEEEauthorblockN{1\textsuperscript{st} Mahtab Talaei\IEEEauthorrefmark{1}}\\
\IEEEauthorblockA{\textit{Department of Electrical and Computer Engineering} \\
\textit{Isfahan University of Technology}\\
Isfahan, Iran \\
mtalaei@bu.edu} \\
\and
\IEEEauthorblockN{2\textsuperscript{nd} Iman Izadi}\\
\IEEEauthorblockA{\textit{Department of Electrical and Computer Engineering} \\
\textit{Isfahan University of Technology}\\
Isfahan, Iran \\
iman.izadi@iut.ac.ir}
}

\maketitle

\begin{abstract}
Federated learning (FL), a novel branch of distributed machine learning (ML), develops global models through a private procedure without direct access to local datasets. However, it is still possible to access the model updates (gradient updates of deep neural networks) transferred between clients and servers, potentially revealing sensitive local information to adversaries using model inversion attacks. Differential privacy (DP) offers a promising approach to addressing this issue by adding noise to the parameters. On the other hand, heterogeneities in data structure, storage, communication, and computational capabilities of devices can cause convergence problems and delays in developing the global model. A personalized weighted averaging of local parameters based on the resources of each device can yield a better aggregated model in each round. In this paper, to efficiently preserve privacy, we propose a personalized DP framework that injects noise based on clients' relative impact factors and aggregates parameters while considering heterogeneities and adjusting properties. To fulfill the DP requirements, we first analyze the convergence boundary of the FL algorithm when impact factors are personalized and fixed throughout the learning process. We then further study the convergence property considering time-varying (adaptive) impact factors.

\end{abstract}

\begin{IEEEkeywords}
Federated Learning, Differential Privacy, Personalized Impact Factors, Adaptive Impact Factors, Systems and Statistical Heterogineties
\end{IEEEkeywords}

\footnotetext[1]{Mahtab Talaei was affiliated with the Department of Electrical and Computer Engineering at Isfahan University of Technology during the research for this paper. At the time of submission, she is affiliated with the Division of Systems Engineering at Boston University, Boston, USA.}

\section{Introduction}
Smart distributed systems such as smartphones, automated vehicles, multi-agent systems, and wearable devices are growing rapidly in our daily lives. Their underlying mechanism which is attached with sensing and communicating generates an unprecedented amount of data every day. Therefore, utilizing these sources of rich information to enhance services offered to people and organizations owning the data, without violating their privacy, matters a great deal. The developments in the computational and communicational capabilities of intelligent distributed devices along with their abilities to collect and store large datasets have opened up effective alternatives for managing and analyzing local databases.

A common traditional practice to develop predictive machine learning (ML) models is to transmit raw data over networks and generate models in a centralized manner. While this method has provided data owners valuable services throughout the years, their efficiency for today's crowdsourced data is called into question. Communication costs of sending large volumes of data on one hand, and privacy concerns for sharing personal information on the other hand have provided space for decentralized ML algorithms, such as federated learning (FL)~\cite{ref1}. 

Federated machine learning is a promising solution in settings dealing with large volumes of data as well as privacy concerns about clients' sensitive information~\cite{ref8}. In this framework, each device builds its model using local datasets, and the essential model parameters, rather than raw data, are transmitted to the cloud server. The server aggregates these parameters and updates the global model throughout a recursive downloading and uploading cycle~\cite{ref2,ref3,ref4}. Hence, each client benefits from a larger database during the learning process, without direct access to it. While offering great advantages over conventional ML methods, FL has its own challenges. Expensive communications, systems and statistical heterogeneities, and security risks are considered as the four main issues while developing FL models~\cite{ref5}.

%%%%%%%%%%%%%%%%%%%%%%%%%%%%
\IEEEpubidadjcol
%%%%%%%%%%%%%%%%%%%%%%%%%%% 

Deep Learning (DL) models are widely used in FL, especially for feature extraction in the large image, voice, and text datasets. In order to optimize local DL models inside the clients, stochastic gradient descent (SGD) is generally adopted~\cite{ref6,ref7,talaei2024comments}. Sending frequent gradient updates with the massive number of both parameters and clients in FL leads to an extreme rise in communication costs. Increasing the number of local updates~\cite{ref9, ref10} is one natural way to modify communication bottlenecks with more local computations. On the other hand, quantization~\cite{ref11, ref12, ref15} and sparsification~\cite{ref13, ref14} methods mitigate this challenge by reducing the size of transmitted messages in each round. 

Dealing with systems that have different computational capabilities, network capacities, and power resources is an inevitable challenge in FL. Several approaches, including resource-based client selection~\cite{ref16, ref17}, robust and fault-tolerant algorithms~\cite{ref18}, and asynchronous communications~\cite{ref19} address these challenges. On the other extreme, heterogeneity in data distributions of the clients causes problems in the training and convergence of FL algorithms. Using multi-task learning methods~\cite{ref26, ref28} and avoiding local minimums by adding a proximal term to the objective function~\cite{ref27} help handling unbalanced and non-IID data in FL~\cite{ref5}.

Even though the idea of FL was first proposed for its strong privacy guarantees, it has been shown that local datasets can be still revealed to stragglers using model inversion attacks on shared updates~\cite{ref29}, especially when DL is used in local models~\cite{ref30, ref31}. To mitigate this challenge, differential privacy (DP) is one of the widely used protection algorithms due to its solid theoretical guarantees~\cite{ref20,talaei2024adaptive}. In order to reduce the risk of data leakage in ML algorithms, noise with Gaussian, Laplace, or Exponential distribution is deliberately added to data in DP. The work in~\cite{ref23} proposes a global DP algorithm in FL and gives a theoretical explanation for the convergence behavior of the suggested scheme.

As discussed earlier, nodes in a distributed architecture differ in the data structure, dataset size, network condition, reliability, availability, and computation capabilities, which can even be time-varying. A privacy-preserving approach in FL is not effective unless paying attention to these personalized characteristics. Hence, there exist multiple works on DP with the content ``adaptive" to compensate systems and statistical heterogeneities in FL. These works can be divided into two general directions based on the adaptability criterion. One direction injects an adaptive noise distribution to local parameters to enhance local protection. It considers each client separately without involving their heterogeneity. For instance, adaptive clipping ~\cite{ref34} finds the best clipping constant for DP in each device based on their local behaviors. In~\cite{ref32}, noise with Laplace distribution is added to model updates based on the neurons' contributions in the clients. The work in~\cite{ref33} achieves a trade-off between privacy and accuracy by adding more noise to less important parameters and less noise to more important ones. 

The second direction, however, concentrates on personalized training in the heterogeneous networks. The work in~\cite{ref35} trains differentially private models in each client and uploads the local updates for the server. These directions both lack considering the local characteristics in the aggregating process. More specifically, they assume the same impact factor for all devices during aggregation, regardless of their local dissimilarities. This assumption not only simplifies the convergence analysis of the algorithm but also changes the DP requirements~\cite{ref7}. To the best of our knowledge, the privacy and convergence analysis in FL with non-identical and time-varying impact factors have not yet been studied in the existing literature. 

In this paper, we combine the heterogeneity and privacy concerns in a novel FL scheme. Regardless of multi-task learning algorithms used in FL, each local model possesses a weight or an impact in the global cost function. This impact can be assigned considering many factors by the server or the clients. It can also change (increase or decrease) or even become zero during the learning process. We, therefore, propose a DP algorithm considering the non-identical impact factors, namely, personalized aggregation in differentially private federated learning (PADPFL). We further establish the convergence analysis of the algorithm and the influence of the additive noise on it.

In summary, the main contributions of this paper are as
follows.

\begin{itemize}
\item{We propose a noise injection paradigm, PADPFL, that satisfies DP requirements with Gaussian distribution when clients have different impact factors in the aggregation process.}
\item{We perform a convergence analysis of the proposed algorithm for Non-IID clients when using fixed non-identical impact factors throughout training the global model.}
\item{We perform a convergence analysis of the proposed algorithm for Non-IID clients when using adaptive (time-varying) impact factors throughout training the global model.}
\item{We conduct evaluations on real-world datasets to verify the effectiveness of PADPFL,
and observe the trade-off between model accuracy, privacy budget, and impact factors.}

\end{itemize}

The remainder of this paper is organized as follows. In
Section II, we review some preliminaries on FL,and DP. In Section III, we introduce our approach for a differentially private federated learning in a client and server side. Next, we analyse the convergence bound on the global loss function of the proposed solution for a fixed and time-varying impacts, in Section IV and V, respectively. Simulations and results are
presented in Section IV, and the summary and conclusion are
given in Section VI.

\section{Preliminaries}
In this section, we briefly review some key materials of FL and DP.

\subsection{Federated Learning}
The goal in a standard FL problem is to develop a global ML model for tens to millions of clients without direct access to their local datasets~\cite{ref21}. The only messages transmitted from the clients to the cloud server in this framework are the training parameter updates of the local loss functions. To formalize this goal, consider $N$ clients as depicted in Fig.~\ref{fig:FL}. We wish to find weight matrix $x$ minimizing the following loss function:  
\begin{equation}
\min_x L(x),\, \textnormal{where} \, L(x):= \sum _{i=1} ^N p_i l_i(x, \mathcal{D}_i), 
\label{eq1}
\end{equation}
where $l_i$ and $D_i$ represent the local loss function and training database of the $i$-th client, respectively. Moreover, the coefficient $p_i$ is considered as the relative impact factor of device $i$ in the global model, so that
\begin{equation}
\sum _{i=1} ^ N p_i = 1, \qquad 0\leqslant p_i \leqslant 1 \label{eq2}
\end{equation}

In order to solve \eqref{eq1}, matrix of the global parameters at itteration $(t)$ is updated using weighted averaging of the trained local parameters ($x_1 ^{(t)}, x_2 ^{t}) , ... x_N^{(t)}$)~\cite{ref22}
\begin{equation}
x^{(t)}:= \sum _{i=1} ^N p_i x_i^{(t)}, \label{eq3}
\end{equation}

\begin{figure}[b]
\begin{center}
\includegraphics[width=1\linewidth]{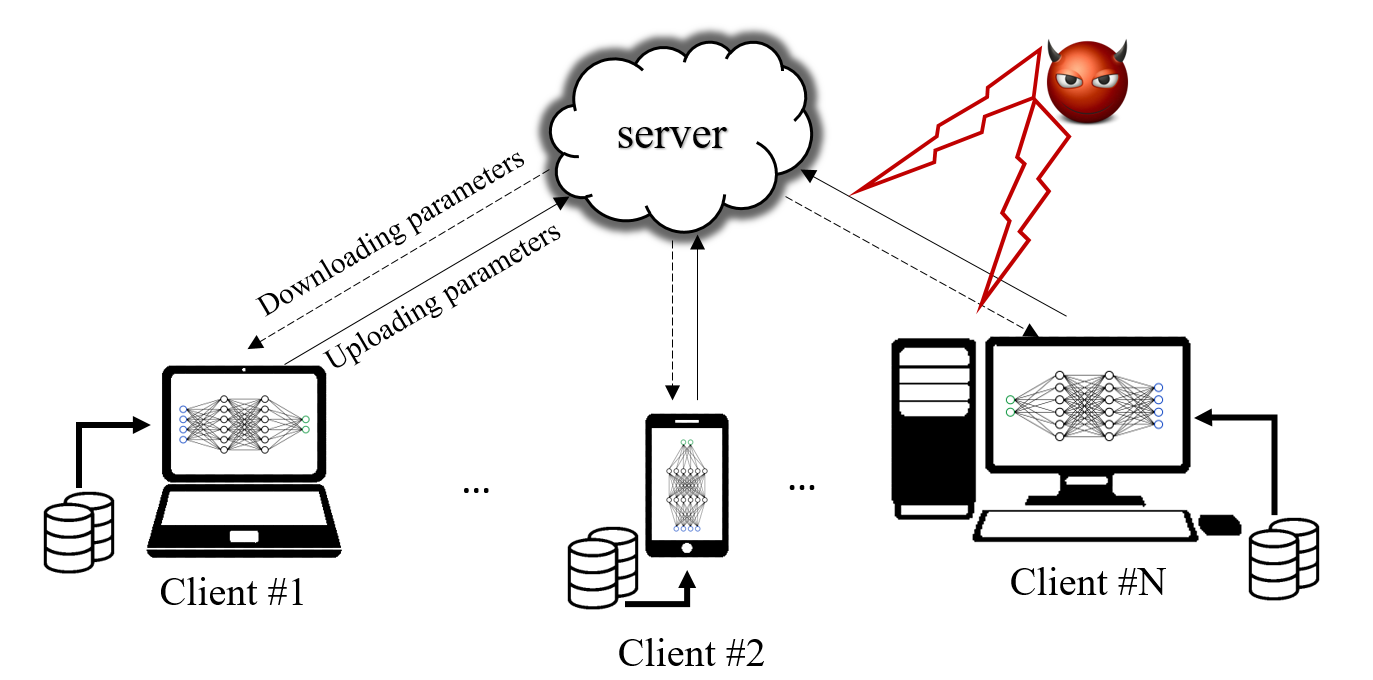}
\caption{A FL training model. }\label{fig:FL}
\end{center}
\end{figure}

To address heterogenities, FedProx~\cite{ref27} is utilized in the learning process. Therefore, defining
\begin{equation}
h_i(x_i^{(t+1)};x^{(t)}) = l_i(x_i^{(t+1)}) +
\frac{\mu}{2} \lVert x_i^{(t+1)} - x^{(t)} \rVert ^2,  \quad \gamma \in [0,1],
\label{eq4} \end{equation}
$x_0$ is the $\gamma_i$-inexact solution for $\min_x h_i(x,x^{(t)})$.

\subsection{Differential Privacy}

DP gives a rigorous mathematical definition of privacy and strongly guarantees preserving data in ML algorithms.
A randomized mechanism $\mathcal{M}$ is differentially private if its output is robust to any change of one sample in the original dataset. The following definition formally clarifies this statement for $(\epsilon, \delta)$-DP \cite{ref20}:

\begin{definition}[$(\epsilon, \delta)$-DP]
A randomized mechanism $\mathcal{M}: \mathcal{X}\rightarrow\mathcal{R}$ satisfies $(\epsilon, \delta)$-differential privacy for two non-negative numbers 
$\epsilon$ and $\delta$ if for all adjacent datasets $\mathcal{D}$ and $\mathcal{D}^{\prime}$ $d(\mathcal{D},\mathcal{D}^{\prime}) = 1$, and for all subsets $S \subseteq
\mathcal{R}$, there holds
\begin{equation}
\textnormal{Pr}[\mathcal{M}(\mathcal{D}) \in S] 
\leqslant 
e ^{\epsilon} \textnormal{Pr}[\mathcal{M}(\mathcal{D}^{\prime}) \in S]
+ \delta,
\end{equation}
\end{definition}
\noindent where the randomized algorithm $\mathcal{M}$ maps an input $x \in \mathcal{X}$ discretely to $\mathcal{M}(x)=y$ with probability $(\mathcal{M}(x))_y,\, \forall y \in \mathcal{R}$. The probability space is defined over the coin flips of the mechanism $\mathcal{M}$. Note that the difference between two datasets $\mathcal{D}$ and $\mathcal{D}^{\prime}$, $d(\mathcal{D},\mathcal{D}^{\prime})$, is typically defined as the number of records on which they differ.

It is concluded from this definition that, with a probability of $\delta$, the output of a differentially private mechanism on two adjacent datasets varies more than a factor of $e ^\epsilon$. Thus, smaller values of $\delta$ enhance the probability of having the same outputs. Smaller values of $\epsilon$ narrow down the privacy protection bound. The smaller $\epsilon$ and $\delta$, the lower the risk of privacy violation. 

Based on \cite{ref20}, considering $f$ as an arbitrary $d$-dimensional function applied on a dataset, for $\epsilon \in (0,1)$ and $c \geqslant \sqrt{2 \ln{(1.25/ \delta)}}$, a Gaussian mechanism with parameter $\sigma \geqslant c \Delta f / \epsilon $ that deliberately adds Gaussian noise scaled to $\mathcal{N}(0,\sigma ^ 2)$ to each output component of $f$ is $(\epsilon, \delta)$-differentially private. Here, $\Delta f$ is the sensitivity of the function $f$ defined by $\Delta f = \max _{\mathcal{D}, \mathcal{D}^{\prime}} \lVert f(\mathcal{D}) - f({\mathcal{D}^{\prime}}) \rVert $.

\section{Personalized Differential Privacy in Federated Learning}
In this section, we propose the personalized noise injection for preserving DP. We first describe the threat model and then propose the algorithm.  

\subsection{Threat Model and Design Goals}
We consider the cloud server to be an ``honest-but-curious" entity, ie, the central server can use model inversion attacks to recover training data. Additionally, local and global parameter updates can be revealed to adversaries in the uploading and downloading channels. For this reason, the goal of our approach is to protect the weights transmitted between the server and clients from being inferred any extra information about users to both the server or external adversaries. Preserving global privacy is the primal goal of our approach, but it leads to a level of local privacy, as well.
 
Following from \cite{ref23}, we also assume that downloading channels are exposed to more external attacks than uploading channels as they are broadcasting. Hence, considering $T$ aggregation times, the revelation of local parameters while uploading can be at most $R$ times ($R \leqslant T$).

\subsection{Proposed Privacy-Preserving Scheme}
considering systems and statistical heterogeneities, each client influences the global loss function ($L$) differently. Apart from multi-task learning methods that aim to develop personalized local models, the aforementioned impact factor ($p_i$) can play a vital role in training accurate models. The impact factors assigned to the clients at each iteration can strengthen or weaken the effect of local models in the global loss function.  

Although using non-identical impact factors may seem a straightforward approach, its importance is underestimated in the literature. Assuming the natural setting $p_i=\frac{1}{N}$ or even $p_i= \frac{m_i}{m}$, where $m_i=\vert\mathcal{D}_i\vert$ and $m= \sum_i m_i$ is the total number of samples, is far from reality and oversimplifies the problem. In fact, clients participate in learning in different ways as their data and structures are not the same. To compensate for these heterogeneities, we assign different impacts to clients while aggregating models. The heterogeneities between the clients can come from several sources, including:
\begin{itemize}
\item{data quality}
\item{data reliability}
\item{dataset size}
\item{link quality}
\item{revelation probability}
\item{accessibility}
\item{client reliability}
\end{itemize}
The first three items relate to the clients, while the remaining depends primarily on the knowledge of the central server. 

While working collaboratively with possibly millions of users, even when datasets have the same nature, the quality of information used in local models is not the same. For instance, while modelling image datasets, the resolution of data varies between the clients, and hence, training local models based on low-quality images reduces the global model performance. Moreover, the reliability of local datasets can influence the validity of models, as they may contain irrelevant information. So, users can send additional bits to the cloud server, based on local model performances, to help the server assign relevant impact factors. On the other hand, the size of local datasets does not necessarily affect impact factors directly. In Non-IID structures, we may have valuable informative datasets that are relatively small in size, but global models should be biased in favor of them. Therefore, the assumption $p_i= \frac{m_i}{m}$ would not be a wise choice.

Stemming from the fact that distributed learning requires training and inference of models over a wireless system, uncertainty and stochasticity exist in its nature. Link errors and delays can adversely influence the convergence speed of the learning algorithm \cite{ref24}. However, utilizing variant impact factors can mitigate this challenge to some extent. When client $k$ cannot synchronize itself with the others due to network faults, the server can distribute $p_k$ among counterpart clients for that iteration to keep pace with the algorithm. Additionally, different levels of accessibility and reliability of local parameters lead to utilizing non-identical impact factors. When several groups of IoT devices and sensor networks perform the measurements and model updates cooperatively in a FL task, weighting the updates appropriately and based on the devices' accuracy, reliability, or accessibility must be a priority for the server.

Along with all the aforementioned heterogeneity sources, impact factors can vary between global iterations. In other terms, the impacts assigned to the local model weights are not fixed throughout the learning process. They can change to accommodate different situations. For instance, a client sending accurate updates may become out of charge or encounter noisy links in the middle of learning process. Hence, a wise server should adaptively reduce the impact factor of its parameters to save the global model performance.

In this paper, we achieve $(\epsilon, \delta)$-DP using Gaussian mechanism, which provides theoretical privacy guarantees for sharing DL model updates. Here, we calculate the amount of the client-side and server-side additive noise based on the sensitivity parameter when impact factors are non-identical. The differential privacy requirements are satisfied for each iteration, and the only limitation on impact factors is  \eqref{eq2}.

\subsubsection*{\bf{Client-side DP}}
Assume that local model parameters are sent to the cloud server as the updates. Setting the batch size equal to the local dataset size
$\lvert \mathcal{D}_i \rvert = m_i$, the function $f$ to be protected is defined as
\begin{equation}
\begin{split}
f_i (\mathcal{D}_i) \triangleq 
x_i &= \text{arg} \min _x l_i(x, \mathcal{D}_i)\\ 
&= \frac{1}{m_i} 
\sum _{j=1} ^ {m_i} \text{arg} \min _x l_i(x, \mathcal{D}_{i,j}), 
\quad \forall i
\end{split}
\end{equation}

\noindent By clipping the local weights using a bounding limit $B$, $ \lVert x_i \rVert \leq B$ \cite{ref7}, the sensitivity of $f_i$ is calculated as
\begin{equation}
\begin{split}
\Delta f_i &= \max _{\mathcal{D}_i , \mathcal{D}^{\prime}_i} 
\lVert f_{i ,\mathcal{D}_i} - f_{i, \mathcal{D}^{\prime}_i} \rVert \\
&= \max _{\mathcal{D}_i , \mathcal{D}^{\prime}_i}
\bigg \lVert \frac{1}{m_i} \bigg (
\sum _{j=1} ^ {m_i} \text{arg} \min _x l_i(x, \mathcal{D}_{i,j}) \\
&- \sum _{j=1} ^ {m_i} \text{arg} \min _x l_i(x, \mathcal{D}^{\prime} _{i,j}) \bigg) \bigg\rVert \\
&= \frac{1}{m_i} \max \big \lVert 
\text{arg} \min _x l_i(x, \mathcal{D}_{i,k}) -
\text{arg} \min _x l_i(x, \mathcal{D}^{\prime} _{i,k}) \big \rVert \\
&= \frac{2B}{m_i}, 
\label{eq5}
\end{split}
\end{equation}

\noindent where, based on the definition of sensitivity here, the $i$-th client's dataset $\mathcal{D}_i$ and $\mathcal{D}^{\prime}_i$ differ only in one sample ($k$-th sample).

To ensure $(\epsilon,\delta)$-DP for each client in FL, we have to add Gaussian noise with parameter $\sigma = c \frac{2B}{m_i \epsilon}$ to the weight matrices of all clients before uploading. Considering the maximum revelation times $R$, $\sigma$ should be multiplied with it to guarantee the desired protection level of local parameters. Hence, to have a united noise parameter, we define the standard deviation (SD) of the additive noise in the client-side as
\begin{equation}
\sigma _{C_i} = \frac{2B R \, c}{\min \{m_i\} \, \epsilon}, \quad \forall i
\label{eq6}
\end{equation}

\subsubsection*{\bf{Server-side DP}} 
The function to be protected in the server side is the global aggregated weight transmitted to the clients defined by
\begin{equation}
f \triangleq x = \sum _ {i=1} ^ N p_i x_i.  
\label{eq7}
\end{equation}
Based on the analysis provided in \cite{ref7} and the client-side sensitivity in \eqref{eq5}, the sensitivity of $f$ is bounded as
\begin{equation}
\Delta f  \leq 2 B \frac{\max \{ p_i\}}{\min \{m_i\}}.
\label{eq8}
\end{equation}

Here, to find the SD of the additive Gaussian noise in the server, we first calculate the distribution of the aggregated local noises. The aggregated noisy weight at each iteration is given as
\begin{equation}
x =\sum _{i=1} ^n p_i \tilde{x}_i = \sum _{i=1}^N p_i (x_i + n_i)= 
\sum _{i=1}^N p_i x_i + \underbrace{\sum _{i=1}^N p_i n_i}_{n}, 
\label{eq9}
\end{equation}
where, for the independent normally distributed $n_i \sim \mathcal{N}(0, \sigma _ {C_i}^2)$, we have
\begin{equation}
n \sim \mathcal{N}(0, \underbrace{\sum _{i=1} ^N {\sigma _ {C_i}^2 p_i ^2 }} _ { \sigma ^2 _{AC}}).
\label{eq10}
\end{equation} 
Therefore, we have the following theorem to ensure $(\epsilon,\delta)$-DP from the server perspective.

\begin{theorem}[server-side DP]
Considering $T$ as the aggregation times and the maximum revelations in the broadcasting channels, the SD of the server-side noise is given by
\begin{equation}
\sigma _S = 
\begin{cases}
\frac{2B c \sqrt{ T^2 \max{p_i}^2 - R^2 \sum_{i=1} ^N p_i ^2}}{\min \{m_i\} \, \epsilon}, &{\text{if}}\ T > \frac{R \sqrt{\sum _i p_i ^2}}{\max \{p_i\}}  \\ 
{0,}&{\text{otherwise.}} 
\end{cases}
\label{eq11}
\end{equation}
\end{theorem}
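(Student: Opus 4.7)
The plan is to express the total additive noise appearing in the broadcast aggregate as the sum of two independent Gaussians, the already-present aggregated client noise and the server-side noise $n_S$, and then ask how large $n_S$ must be so that the overall Gaussian mechanism protecting $f=\sum_i p_i x_i$ satisfies $(\epsilon,\delta)$-DP through all $T$ broadcast rounds. Once the required total variance is known, $\sigma_S^{2}$ is obtained by simple subtraction, and the threshold in the case split falls out from requiring the difference to be non-negative.

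Concretely, I would proceed in four steps. First, I would invoke \eqref{eq8} to fix the server-side sensitivity $\Delta f \le 2B\max\{p_i\}/\min\{m_i\}$. Second, applying the Gaussian mechanism together with the composition argument used for the $T$ broadcast revelations (mirroring the $R$-factor that produced \eqref{eq6} on the client side), the effective noise standard deviation that must appear in the transmitted aggregate is
\begin{equation}
\sigma_{\mathrm{req}} \;=\; \frac{c\,T\,\Delta f}{\epsilon} \;=\; \frac{2B\,c\,T\,\max\{p_i\}}{\min\{m_i\}\,\epsilon}.
\end{equation}
Third, by \eqref{eq9}--\eqref{eq10}, the aggregate $x=\sum_i p_i \tilde x_i + n_S$ already carries an independent Gaussian of variance $\sigma_{AC}^{2}=\sum_i p_i^{2}\sigma_{C_i}^{2}$, which after substituting \eqref{eq6} equals $\bigl(2Bc/(\min\{m_i\}\epsilon)\bigr)^{2}R^{2}\sum_i p_i^{2}$. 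Since $n_S$ is chosen independent of the $n_i$'s, the total noise in the broadcast is Gaussian with variance $\sigma_S^{2}+\sigma_{AC}^{2}$, and imposing $\sigma_S^{2}+\sigma_{AC}^{2}=\sigma_{\mathrm{req}}^{2}$ yields
\begin{equation}
\sigma_S^{2} \;=\; \left(\frac{2Bc}{\min\{m_i\}\,\epsilon}\right)^{2}\!\Bigl(T^{2}\max\{p_i\}^{2}-R^{2}\sum_{i=1}^{N}p_i^{2}\Bigr),
\end{equation}
from which the first branch of \eqref{eq11} follows by taking the square root.

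Fourth, for the case split I would observe that the right-hand side above is non-negative precisely when $T\max\{p_i\}\ge R\sqrt{\sum_i p_i^{2}}$, which is the threshold stated in the theorem. When this inequality fails, the aggregated client noise already overshoots $\sigma_{\mathrm{req}}$, so the Gaussian mechanism on the server side is satisfied with $\sigma_S=0$; this gives the second branch. The step I expect to require the most care is the composition accounting in the second step: one has to argue cleanly that the $T$ broadcasts force an inflation factor of $T$ in the per-round SD (paralleling the $R$ factor on the client side) while keeping the constant $c\ge\sqrt{2\ln(1.25/\delta)}$ consistent with the single-shot Gaussian mechanism guarantee. The remaining manipulations are purely algebraic: expressing $\sigma_{AC}^{2}$ in terms of the common prefactor $2Bc/(\min\{m_i\}\epsilon)$ so that it cancels with $\sigma_{\mathrm{req}}^{2}$ under the square root.
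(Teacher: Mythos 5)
Your proposal is correct and follows essentially the same route as the paper: the paper likewise sets the total required standard deviation to $\sigma_A = 2BTc\max\{p_i\}/(\min\{m_i\}\epsilon)$ from the sensitivity bound \eqref{eq8}, subtracts the aggregated client-noise variance $\sigma_{AC}^2$ from \eqref{eq10} using independence, and obtains the case split from non-negativity of the difference. Your write-up is in fact more explicit than the paper's two-line proof, particularly in spelling out the substitution of \eqref{eq6} into $\sigma_{AC}^2$ and the origin of the threshold condition.
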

\begin{proof}
The standard deviation of the total desired noise, based on \eqref{eq8}, is $\sigma _ A = \frac{2B \, T \, c \max \{ p_i\}}{\min \{m_i\} \, \epsilon}$. Hence, the variance of the server-side Gaussian noise is calculated by 
\begin{equation}
\sigma ^2 _S = \sigma ^2 _{A} -  \sigma ^2 _{AC},
\label{eq12}
\end{equation}
which results in \eqref{eq11}.
\end{proof}

Applying the client and server-side noise with the calculated Gaussian distributions satisfies $(\epsilon, \delta)$-DP theoretically in the uploading and downloading channels for each iteration. Since the involved clients add noise to the local parameters before uploading for the server, a level of local privacy is also achieved here. The server, subsequently, chooses the relative impact factors based on the information acquired and updates the global parameters.  Then, it decides on the extra server-side noise, $n_s \sim \mathcal{N}(0, \sigma_S)$, and transmits $\tilde{x}= x + n_s$ for the upcoming training cycle.

\section{Convergence Analysis of the \\ Personalized DP in FL}
In this section, we analyze the convergence properties of the proposed algorithm for personalized DP in FL. Our main purpose is to reach a convergence upper limit for the algorithm when we have personalized impact factors.
The required assumptions for our analysis about the properties of the global and local loss functions, regarding their relation $L(x) = \sum _{i=1} ^N p_i l_i (x)$, are as follows:

\begin{assumption}{\ \\}
\vspace{-10pt}
\begin{enumerate}
\item{$l_i(x)$ is convex.}
\item{$l_i(x)$ is $\rho$-Lipschitz smooth, i.e., 
$\lVert \nabla l_i (a) - \nabla l_i (b) \rVert \leqslant \rho \lVert a - b \rVert,\, \forall a, b$. }
\item{$L(x^{(0)}) - L (x ^ *) = \Theta$; where $x^{(0)}$ and $x^*$ represent the initial and optimal model parameters, respectively. }
\item{$\lVert \nabla l_i (x) - \nabla L (x) \rVert \leqslant \varepsilon, \, \forall i,x$; where $\varepsilon$ is the divergence measure.}
\end{enumerate}
\end{assumption}

\noindent Note that the distribution of local datasets in the non-i.i.d fashion breaks the general assumption of $p_i = \frac{m_i}{m}$. Hence, the expectation over clients $\mathbb{E}\{l_i(x)\}$ is not considered equal with the global expectation $\mathbb{E} \{L(x)\}$. The only assumption on relative impact factors is $\sum_{i=1} ^N p_i =1$. 

As the first step through our convergence bounding analysis, we present the following lemma for the local dissimilarity  measure, when having non-identical impacts.

\begin{lemma}[$A$-local dissimilarity]
For the local loss functions $l_i$ with impact factors $p_i$ in the FL global function $L$, there exists $A$ as a measure of dissimilarity at $x$ such that
\begin{equation}
\sum _{i=1} ^{N} p_i \Vert \nabla l_i(x) \Vert \leqslant \Vert \nabla L(x) \Vert  A \quad \forall i,\label{eq:13}
\end{equation}

\begin{proof}
Due to \textbf{Assumption~1}, we have
\begin{equation}
\Vert \nabla l_i(x)-\nabla L(x) \Vert ^2 \leqslant  \varepsilon ^2 
\label{eq:14}
\end{equation}
and
\begin{multline}
\Vert \nabla l_i(x)-\nabla L(x) \Vert ^2 \\
= \Vert \nabla l_i (x) \Vert ^2 - 2 \nabla l_i (x) ^\top \nabla L(x) + \Vert \nabla L(x)\Vert ^2.
\label{eq:15}
\end{multline}
Considering \eqref{eq:15} and multiplying \eqref{eq:14} with $p_i, \forall i$ yields
\begin{multline}
\sum _{i=1} ^{N} p_i \Vert \nabla l_i(x) \Vert ^2 - 2 \sum _{i=1} ^{N} p_i \nabla l_i (x) ^ \top \nabla L (x) \\ + \Vert \nabla L(w) \Vert ^2 \sum _{i=1} ^{N} p_i \leqslant  \varepsilon ^2 \sum _{i=1} ^{N} p_i.
\label{eq:16}
\end{multline}
Considering $\sum _{i=1} ^{N} p_i = 1$ and $\sum _{i=1} ^{N} p_i \nabla l_i (x) = \nabla L (x)$, we have
\begin{multline}
\sum _{i=1} ^{N} p_i \Vert \nabla l_i(x) \Vert ^2  \leqslant  2  \nabla L (x) ^ \top \nabla L (x) - \Vert \nabla L(x) \Vert ^2 + \varepsilon ^2 \\  = \Vert \nabla L(x) \Vert ^2 + \varepsilon ^2 =  \Vert \nabla L(x) \Vert ^2 A_1(x) ^2.
\label{eq:17}
\end {multline}
Note that when $\Vert \nabla L(x) \Vert ^2 \neq 0$, there exists
\begin{equation}
A_1(x) = \sqrt{1 + \frac{\varepsilon ^2}{ \Vert \nabla L(x) \Vert ^2}} \geqslant 1.
\label{eq:18}
\end{equation}

\noindent Therefore, we have
\begin{equation}
\sum _{i=1} ^{N} p_i \Vert \nabla l_i(x) \Vert ^2 \leqslant \Vert \nabla L(x) \Vert ^2 A_1 ^2,\label{eq:19}
\end{equation}
where $A_1$ is the upper bound of $A_1 (x)$.
Considering \eqref{eq:19}, there also exists $A \geqslant 1$ such that
\begin{equation}
\sum _{i=1} ^{N} p_i \Vert \nabla l_i(x) \Vert \leqslant \Vert \nabla L(x) \Vert  A .\label{eq:20}
\end{equation}
This completes the proof.
\end{proof} 
\end{lemma}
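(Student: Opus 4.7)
The plan is to establish a quadratic analogue of the inequality first, then descend to the linear form via Jensen's inequality. The key lever is Assumption~1, part~4, which bounds $\lVert \nabla l_i(x) - \nabla L(x) \rVert \leqslant \varepsilon$ uniformly in $i$, together with the identities $\sum_i p_i = 1$ and $\sum_i p_i \nabla l_i(x) = \nabla L(x)$ that come directly from the definition $L = \sum_i p_i l_i$.

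First I would square the divergence bound and expand the inner product, obtaining
\begin{equation*}
\lVert \nabla l_i(x) \rVert^2 - 2 \nabla l_i(x)^\top \nabla L(x) + \lVert \nabla L(x) \rVert^2 \leqslant \varepsilon^2.
\end{equation*}
Multiplying by $p_i$ and summing over $i$, the cross term collapses to $-2\lVert \nabla L(x)\rVert^2$, and the remaining terms simplify via $\sum_i p_i = 1$. Rearranging yields
\begin{equation*}
\sum_{i=1}^N p_i \lVert \nabla l_i(x) \rVert^2 \leqslant \lVert \nabla L(x) \rVert^2 + \varepsilon^2,
\end{equation*}
which, whenever $\nabla L(x) \neq 0$, factorizes as $\lVert \nabla L(x) \rVert^2 A_1(x)^2$ with $A_1(x) := \sqrt{1 + \varepsilon^2/\lVert \nabla L(x) \rVert^2} \geqslant 1$.

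The second step is to pass from the squared form to the linear form stated in the lemma. I would apply Jensen's inequality to the convex map $t \mapsto t^2$ with probability weights $\{p_i\}$:
\begin{equation*}
\Bigl( \sum_{i=1}^N p_i \lVert \nabla l_i(x) \rVert \Bigr)^2 \leqslant \sum_{i=1}^N p_i \lVert \nabla l_i(x) \rVert^2 \leqslant \lVert \nabla L(x) \rVert^2 A_1(x)^2,
\end{equation*}
and then take square roots to conclude with $A = A_1(x)$ (or any uniform upper bound of $A_1(x)$ over the iterates of interest).

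The main obstacle, and the only point really needing care, is the degenerate case $\nabla L(x) = 0$. There the left-hand side is at most $\varepsilon$ (by Jensen applied directly to $\lVert \nabla l_i(x) - \nabla L(x) \rVert \leqslant \varepsilon$) while the right-hand side is $0 \cdot A = 0$ for any finite $A$, so the inequality cannot hold verbatim. The statement must therefore be read as implicitly restricted to non-stationary points of $L$, exactly as the definition of $A_1(x)$ suggests. Since the subsequent convergence analysis invokes the lemma only away from optima, this restriction is harmless, but I would flag it explicitly rather than leaving it tacit.
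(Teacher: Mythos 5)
Your proposal is correct and follows essentially the same route as the paper: square the divergence bound from Assumption~1(4), expand, take the $p_i$-weighted sum using $\sum_i p_i = 1$ and $\sum_i p_i \nabla l_i(x) = \nabla L(x)$, and obtain $\sum_i p_i \lVert \nabla l_i(x)\rVert^2 \leqslant \lVert \nabla L(x)\rVert^2 A_1(x)^2$. Your two additions are actually improvements on the paper's own write-up: the explicit Jensen step $\bigl(\sum_i p_i \lVert \nabla l_i(x)\rVert\bigr)^2 \leqslant \sum_i p_i \lVert \nabla l_i(x)\rVert^2$ is precisely the justification the paper omits when it passes from its Eq.~(19) to Eq.~(20) with only the phrase ``there also exists $A \geqslant 1$,'' and your observation that the bound fails verbatim at stationary points ($\nabla L(x)=0$ with $\varepsilon>0$) is a genuine caveat the paper leaves tacit behind its parenthetical ``when $\lVert \nabla L(x)\rVert^2 \neq 0$.''
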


Now, the following lemma gives an expected upper bound on the increment of global loss value per-iteration, when DP noise injection is adopted.

\begin{lemma}[Per-iteration expected increment]
The expected difference of global loss functions in two consecutive iterations $(t)$ and $(t+1)$, or the per-iteration expected increment in the value of the loss function, has the following upper limit: 
\begin{multline}
\mathbb{E}\lbrace L(\tilde{x}^{(t+1)}) -  L(\tilde{x}^{(t)})\rbrace \leqslant \lambda _2  \Vert L(\tilde{x}^{(t)})\Vert ^2\\
+ \lambda _1 \mathbb{E} \lbrace \Vert n^ {(t+1)} \Vert \rbrace  \Vert L(\tilde{x}^{(t)})\Vert 
+\lambda _0 \mathbb{E} \lbrace \Vert n^ {(t+1)} \Vert ^2 \rbrace,
\label{eq:21}
\end{multline}

\noindent where
\begin{equation*}
\lambda_2 = -\frac{1}{\mu} +  \frac{A}{\mu} \left( \gamma  + \frac{ \rho (1+ \gamma)}{ \overline{\mu}} \right) + 
\frac{\rho A ^2 {(1+ \gamma)}^2 }{2 {\overline{\mu}}^2},
\end{equation*}
\begin{equation*}
\lambda_1 = 1+ \frac{\rho A (1+ \gamma)}{\overline{\mu}},
\lambda _0 = \frac{\rho}{2},
\end{equation*}
and $n^{(t)}= \sum _{i=1} ^{N} p_i n_i ^{(t)} + n_s ^{(t)}$ is the aggregated noise of the clients and server in each cycle.
\end{lemma}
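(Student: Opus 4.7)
The plan is to combine the $\rho$-Lipschitz smoothness of the global loss with the $\gamma_i$-inexact optimality condition of the FedProx subproblem, and to collapse all per-client gradient dissimilarities through the single constant $A$ supplied by Lemma~1. Since $L=\sum_i p_i l_i$ is a convex combination of $\rho$-smooth functions, it is itself $\rho$-smooth, so I would start from
\begin{equation*}
L(\tilde{x}^{(t+1)}) - L(\tilde{x}^{(t)}) \leqslant \nabla L(\tilde{x}^{(t)})^\top d + \frac{\rho}{2}\lVert d \rVert^2,
\end{equation*}
with $d := \tilde{x}^{(t+1)}-\tilde{x}^{(t)} = \Delta + n^{(t+1)}$, where $\Delta := \sum_{i=1}^N p_i\bigl(x_i^{(t+1)} - \tilde{x}^{(t)}\bigr)$ is the deterministic aggregation step and $n^{(t+1)}$ is the aggregated Gaussian noise. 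Expanding $\lVert d \rVert^2 = \lVert \Delta \rVert^2 + 2\Delta^\top n^{(t+1)} + \lVert n^{(t+1)} \rVert^2$ cleanly splits the right-hand side into a deterministic contribution and three noise-driven contributions.

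For the deterministic part, I would exploit the $\gamma_i$-inexact optimality $\nabla l_i(x_i^{(t+1)}) + \mu(x_i^{(t+1)} - \tilde{x}^{(t)}) = e_i$ with $\lVert e_i \rVert \leqslant \gamma_i \lVert \nabla l_i(\tilde{x}^{(t)}) \rVert$. Aggregating with weights $p_i$ yields $\mu \Delta = -\sum_i p_i \nabla l_i(x_i^{(t+1)}) + \sum_i p_i e_i$. Replacing $\nabla l_i(x_i^{(t+1)})$ by $\nabla l_i(\tilde{x}^{(t)})$ via $\rho$-smoothness costs at most $\rho\lVert x_i^{(t+1)} - \tilde{x}^{(t)} \rVert$ per client; combining this with the companion step bound $\lVert x_i^{(t+1)} - \tilde{x}^{(t)} \rVert \leqslant \frac{1+\gamma_i}{\overline{\mu}}\lVert \nabla l_i(\tilde{x}^{(t)}) \rVert$ (obtained from the strong convexity of $h_i$ with modulus $\overline{\mu}$ together with the inexactness characterization) and invoking Lemma~1 to reduce $\sum_i p_i \lVert \nabla l_i(\tilde{x}^{(t)}) \rVert$ to $A\lVert \nabla L(\tilde{x}^{(t)}) \rVert$ produces
\begin{equation*}
\nabla L(\tilde{x}^{(t)})^\top \Delta \leqslant \left[-\frac{1}{\mu} + \frac{A}{\mu}\left(\gamma + \frac{\rho(1+\gamma)}{\overline{\mu}}\right)\right] \lVert \nabla L(\tilde{x}^{(t)}) \rVert^2,
\end{equation*}
together with the companion bound $\lVert \Delta \rVert \leqslant \frac{A(1+\gamma)}{\overline{\mu}}\lVert \nabla L(\tilde{x}^{(t)}) \rVert$. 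Adding $\tfrac{\rho}{2}\lVert \Delta \rVert^2$ from the smoothness inequality then reassembles exactly the three pieces of $\lambda_2$.

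Finally, I would take expectation over $n^{(t+1)}$ and bound both cross terms $\nabla L(\tilde{x}^{(t)})^\top n^{(t+1)}$ and $\rho\,\Delta^\top n^{(t+1)}$ by Cauchy--Schwarz rather than using their zero-mean cancellation; this is what produces the combined coefficient $\lambda_1 = 1 + \rho A(1+\gamma)/\overline{\mu}$ on $\mathbb{E}\lVert n^{(t+1)} \rVert \lVert \nabla L(\tilde{x}^{(t)}) \rVert$, while the pure noise quadratic $\tfrac{\rho}{2}\lVert n^{(t+1)} \rVert^2$ directly gives $\lambda_0 = \rho/2$. The main obstacle will be the book-keeping: the substitution $\nabla l_i(x_i^{(t+1)}) \to \nabla l_i(\tilde{x}^{(t)})$ drags a second factor of $\lVert \nabla l_i(\tilde{x}^{(t)}) \rVert$ into the inner product through the step bound, so Lemma~1 has to be invoked once in its linear form for the $\nabla L^\top \Delta$ term and once in its squared form for $\lVert \Delta \rVert^2$, and the $\mu$, $\overline{\mu}$, $\rho$, $\gamma$, and $A$ factors must be kept in lockstep for the residue to collapse into precisely the stated $\lambda_2$.
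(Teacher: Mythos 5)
Your proposal is correct and follows essentially the same route as the paper's proof: the same smoothness expansion of $L(\tilde{x}^{(t+1)})-L(\tilde{x}^{(t)})$, the same use of the $\gamma$-inexactness and $\overline{\mu}$-strong convexity of the FedProx subproblem to bound the aggregated step, and the same invocation of Lemma~1 to collapse $\sum_i p_i\Vert\nabla l_i\Vert$ into $A\Vert\nabla L\Vert$ (your residual $e_i$ is exactly the paper's $\nabla h(x_i^{(t+1)};\tilde{x}^{(t)})$). The only cosmetic difference is that you expand $\Vert\Delta+n^{(t+1)}\Vert^2$ before bounding, whereas the paper bounds $\Vert\tilde{x}^{(t+1)}-\tilde{x}^{(t)}\Vert$ first and squares afterwards; both yield the identical coefficients $\lambda_2$, $\lambda_1$, and $\lambda_0$.
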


\begin{proof}
Considering the aggregation process with artificial noises of the client and server side in the $(t+1)$-th aggregation, we have
\begin{equation}
\tilde{x}^{(t+1)}= \sum _{i=1} ^{N} p_i x_i ^{(t+1)} + n ^{(t+1)},\label{eq:22}
\end{equation}
where
\begin{equation}
n^{(t)} = \sum _{i=1} ^{N} p_i n_i ^{(t)} + n_s ^{(t)}
\label{eq:23}
\end{equation}

Because $l_i(\cdot)$ is $\rho$-Lipschitz smooth, we have
\begin{multline}
l_i(\tilde{x}^{(t+1)}) \leqslant l_i(\tilde{x}^{(t)})+
\nabla l_i(\tilde{x}^{(t)})^ \top (\tilde{x}^{(t+1)} - \tilde{x}^{(t)}) \\
+ \frac{\rho}{2} \Vert \tilde{x}^{(t+1)} - \tilde{x}^{(t)}
\Vert ^2
\label{eq:24}
\end{multline}

\noindent for all $\tilde{x}^{(t+1)},~\tilde{x}^{(t)}$. Summation of \eqref{eq:24} multiplied with $p_i, \forall i$ yields
\begin{equation}
\begin{split}
&\sum _{i=1} ^{N} p_i l_i(\tilde{x}^{(t+1)}) \leqslant \sum _{i=1} ^{N} p_i l_i(\tilde{x}^{(t)}) \\ &+
\sum _{i=1} ^{N} p_i \nabla l_i(\tilde{x}^{(t)})^ \top (\tilde{x}^{(t+1)} - \tilde{x}^{(t)}) 
\\ &+ \frac{\rho}{2} \Vert \tilde{x}^{(t+1)} - \tilde{x}^{(t)}\Vert ^2 \sum _{i=1} ^{N} p_i.
\label{eq:25}
\end{split}
\end{equation}
Considering the definition of global loss function $L(\cdot)$ and $\sum _{i=1} ^{N} p_i = 1$, we have
\begin{multline}
L(\tilde{x}^{(t+1)}) -  L(\tilde{x}^{(t)}) \leqslant  
\nabla L(\tilde{x}^{(t)})^ \top (\tilde{x}^{(t+1)} - \tilde{x}^{(t)}) \\
+ \frac{\rho}{2} \Vert \tilde{x}^{(t+1)} - \tilde{x}^{(t)}\Vert ^2
\label{eq:26}
\end{multline}
and therefore,
\begin{equation}
\begin{split}
&\mathbb{E}\lbrace L(\tilde{x}^{(t+1)}) -  L(\tilde{x}^{(t)})\rbrace \leqslant  \\&
\mathbb{E} \left\lbrace \left\langle \nabla L(\tilde{x}^{(t)}), (\tilde{x}^{(t+1)} - \tilde{x}^{(t)})  \right\rangle \right\rbrace 
+ \frac{\rho}{2} \mathbb{E} \lbrace \Vert \tilde{x}^{(t+1)} - \tilde{x}^{(t)}\Vert ^2 \rbrace
\label{eq:27}
\end{split}
\end{equation}

Defining
\begin{equation}
h(x_i ^{(t+1)};\tilde{x} ^{(t)})\triangleq l_i(x_i ^{(t+1)}) +\frac{\mu}{2}\Vert x_i^{(t+1)} - \tilde{x}^{(t)}\Vert ^2,
\label{eq:28}
\end{equation}
we have
\begin{equation}
\nabla h(x_i ^{(t+1)};\tilde{x} ^{(t)})= 
\nabla l_i(x_i ^{(t+1)}) + \mu (x_i^{(t+1)} - \tilde{x}^{(t)}).
\label{eq:29}
\end{equation}
Summation of \eqref{eq:29} multiplied with $p_i,\, \forall i$ yields
\begin{multline}
\sum _{i=1} ^{N} p_i\nabla h(x_i ^{(t+1)};\tilde{x} ^{(t)}) =
\sum _{i=1} ^{N} p_i \nabla l_i(x_i ^{(t+1)})\\ 
+ \mu \sum _{i=1} ^{N} p_i (x_i^{(t+1)} - \tilde{x}^{(t)})=
\sum _{i=1} ^{N} p_i \nabla l_i(x_i ^{(t+1)})\\ 
+ \mu \sum _{i=1} ^{N} p_i x_i^{(t+1)} - \mu \tilde{x}^{(t)}
\label{eq:30}
\end{multline}
and therefore,
\begin{equation}
\begin{split}
&\tilde{x}^{(t+1)} - \tilde{x}^{(t)}=
\sum _{i=1} ^{N} p_i x_i^{(t+1)} + n^{(t+1)} - \tilde{x}^{(t)}\\ &=
\frac{1}{\mu}\left[ \sum _{i=1} ^{N} p_i \left( \nabla h(x_i ^{(t+1)};\tilde{x} ^{(t)}) - \nabla l_i(x_i ^{(t+1)}) \right) \right] +  n^{(t+1)}.
\label{eq:31}
\end{split}
\end{equation}
Substituting \eqref{eq:31} into \eqref{eq:27}, we obtain
\begin{equation}
\begin{split}
&\mathbb{E} \lbrace  L(\tilde{x}^{(t+1)}) -  L(\tilde{x}^{(t)})\rbrace \leqslant   
\mathbb{E}  \Bigg\lbrace  \frac{1}{\mu} \Big\langle \nabla L(\tilde{x}^{(t)}),\\
&\sum _{i=1} ^{N} p_i \left( \nabla h(x_i ^{(t+1)};\tilde{x} ^{(t)} ) - \nabla l_i(x_i ^{(t+1)})\right) \Big\rangle \\ 
&  + \left\langle \nabla L(\tilde{x}^{(t)}), n^{(t+1)}  \right\rangle \Bigg\rbrace
+ \frac{\rho}{2} \mathbb{E} \left\lbrace \Vert \tilde{x}^{(t+1)} - \tilde{x}^{(t)}\Vert ^2 \right\rbrace \\ &=
\mathbb{E}  \Bigg\lbrace  \frac{1}{\mu} \Big\langle \nabla L(\tilde{x}^{(t)}),
\sum _{i=1} ^{N} p_i \left(  \nabla h(x_i ^{(t+1)};\tilde{x} ^{(t)} )  \right. \\ &- \nabla l_i(x_i ^{(t+1)})+ \left. \nabla l_i(\tilde{x} ^{(t)}) \right) - \sum _{i=1} ^{N} p_i \nabla l_i(\tilde{x} ^{(t)}) \Big\rangle \\ 
& + \Big\langle \nabla L(\tilde{x}^{(t)}), n^{(t+1)}  \Big\rangle \Bigg\rbrace + \frac{\rho}{2} \mathbb{E} \left\lbrace \Vert \tilde{x}^{(t+1)} - \tilde{x}^{(t)}\Vert ^2 \right\rbrace 
\\ 
&= -\frac{1}{\mu} \Vert \nabla L(\tilde{x}^{(t)}) \Vert ^2 + \mathbb{E}  \Bigg\lbrace  \frac{1}{\mu} \Big\langle \nabla L(\tilde{x}^{(t)}), \\ &
\sum _{i=1} ^{N} p_i  \nabla h(x_i ^{(t+1)};\tilde{x} ^{(t)} )+ \sum _{i=1} ^{N} p_i \left( \nabla l_i(\tilde{x} ^{(t)})\right. \\& - \left. \nabla l_i(x_i ^{(t+1)}) \right)\Big\rangle \Bigg\rbrace + \mathbb{E}\left\lbrace \Big\langle \nabla L(\tilde{x}^{(t)}), n^{(t+1)}  \Big\rangle \right\rbrace \\&
+ \frac{\rho}{2} \mathbb{E} \left\lbrace \Vert \tilde{x}^{(t+1)} - \tilde{x}^{(t)}\Vert ^2 \right\rbrace
\label{eq:32}
\end{split}
\end{equation}

Now, let us bound $ \Vert \tilde{x}^{(t+1)} - \tilde{x}^{(t)}\Vert$. We know
\begin{equation}
\Vert x_i^{(t+1)} - \tilde{x}^{(t)}\Vert \leqslant
\Vert x_i^{(t+1)} - \hat{x}_i^{(t+1)}\Vert +  
\Vert \hat{x}_i^{(t+1)} - \tilde{x}^{(t)}\Vert,
\label{eq:33}
\end{equation}
where $\hat{x}_i^{(t+1)} =\text{arg} \min _x h_i(x;\tilde{x} ^{(t)})$ Define $\overline{\mu} = \mu - \rho_- > 0$, due to the $\overline{\mu}$-convexity of $h_i(x;\tilde{x} ^{(t)})$ we have
\begin{equation}
\Vert \hat{x}_i^{(t+1)} - x_i^{(t+1)}\Vert \leqslant
\frac{\gamma}{\overline{\mu}} \Vert \nabla l_i(\tilde{x}^{(t)}) \Vert
\label{eq:34}
\end{equation}
and
\begin{equation}
\Vert \hat{x}_i^{(t+1)} - \tilde{x}^{(t)}\Vert \leqslant
\frac{1}{\overline{\mu}} \Vert \nabla l_i(\tilde{x}^{(t)}) \Vert
\label{eq:35}
\end{equation}
where $\gamma \in [0,1]$ denotes a $\gamma$-inexact solution of $\min _x h_i(x;\tilde{x} ^{(t)})$ \cite{ref25}. For such a solution, $x_0$, we have
\begin{equation}
\Vert
\nabla h(x_0; \tilde{x})\Vert \leqslant \gamma \Vert  \nabla h(\tilde{x}; \tilde{x})
\Vert .
\label{eq:36}
\end{equation}
 Now we can use \eqref{eq:34} and \eqref{eq:35} to obtain
\begin{equation}
\Vert x_i^{(t+1)} - \tilde{x}^{(t)}\Vert \leqslant
\frac{1+ \gamma}{\overline{\mu}} \Vert \nabla l_i(\tilde{x}^{(t)}) \Vert.
\label{eq:37}
\end{equation}

\noindent Therefore,
\begin{equation}
\begin{split}
\Vert \tilde{x}^{(t+1)} &- \tilde{x}^{(t)}\Vert
 = \Vert x^{(t+1)} + n ^ {(t+1)} - \tilde{x}^{(t)}\Vert   \\  & \leqslant
\Vert x^{(t+1)} - \tilde{x}^{(t)}\Vert +\Vert n^ {(t+1)} \Vert \\
& = \left\Vert \sum _{i=1} ^ N p_i ( x_i ^{(t+1)} - \tilde{x}^{(t)})\right\Vert +\Vert n^ {(t+1)} \Vert \\
& \leqslant
 \sum _{i=1} ^ N p_i \Vert  x_i ^{(t+1)} - \tilde{x}^{(t)} \Vert +\Vert n^ {(t+1)} \Vert \\
& \leqslant
 \sum _{i=1} ^ N p_i \left( \frac{1+ \gamma}{\overline{\mu}} \Vert \nabla l_i(\tilde{x}^{(t)}) \Vert \right) +\Vert n^ {(t+1)} \Vert \\
& \leqslant
\frac{A (1+ \gamma)}{\overline{\mu}} \Vert \nabla L(\tilde{x}^{(t)}) \Vert +\Vert n^ {(t+1)} \Vert. 
\label{eq:38}
\end{split}
\end{equation}

Since $l_i(\cdot)$ is $\rho$-Lipschitz smooth, we have
\begin{equation}
\Vert
\nabla l_i (\tilde{x}^{(t)}) - \nabla l_i (x_i^{(t+1)})
\Vert
\leqslant \rho 
\Vert
\tilde{x}^{(t)} - x_i^{(t+1)}
\Vert
\label{eq:39}
\end{equation}
Using the triangle inequality, \eqref{eq:36}, \eqref{eq:38}, and \eqref{eq:39}, we obtain
\begin{equation}
\begin{split}
&\Bigg\Vert
\sum _{i=1} ^{N} p_i  \nabla h(w_i ^{(t+1)};\tilde{x} ^{(t)} )+  \sum _{i=1} ^{N} p_i \left( \nabla l_i(\tilde{x} ^{(t)})\right. \\& - \left. \nabla l_i(x_i ^{(t+1)}) \right) 
\Bigg\Vert \leqslant
\left\Vert
\sum _{i=1} ^{N} p_i  \nabla h(x_i ^{(t+1)};\tilde{x} ^{(t)} )\right\Vert \\ & + 
\left\Vert \sum _{i=1} ^{N} p_i \left( \nabla l_i(\tilde{x} ^{(t)})\right. - \left. \nabla l_i(x_i ^{(t+1)}) \right) 
\right\Vert \\ &\leqslant
\sum _{i=1} ^{N} p_i \left\Vert \nabla h(x_i ^{(t+1)};\tilde{x} ^{(t)} )\right\Vert  + 
\sum _{i=1} ^{N} p_i \Big\Vert \left( \nabla l_i(\tilde{x} ^{(t)})\right. \\ &- \left. \nabla l_i(x_i ^{(t+1)}) \right) 
\Big\Vert \leqslant
\gamma \sum _{i=1} ^{N} p_i \Vert \nabla l_i(\tilde{x}^{(t)}) \Vert
\\ & + \rho \sum _{i=1} ^{N} p_i \Vert \tilde{x}^{(t)} - x_i^{(t+1)} \Vert \leqslant  A \gamma \Vert \nabla L(\tilde{x}^{(t)}) \Vert  \\& +\frac{\rho A(1+ \gamma)}{\overline{\mu}} \Vert \nabla L(\tilde{x}^{(t)}) \Vert.
\label{eq:40}
\end{split}
\end{equation}
Then, from \eqref{eq:40} and the Cauchy-Schwarz inequality we have
\begin{equation}
\begin{split}
&\Big\langle \nabla L(\tilde{x}^{(t)}),
\sum _{i=1} ^{N} p_i  \nabla h(x_i ^{(t+1)};\tilde{x} ^{(t)} )
\\ &+ \sum _{i=1} ^{N} p_i \left( \nabla l_i(\tilde{x} ^{(t)})\right.  - \left. \nabla l_i(x_i ^{(t+1)}) \right)\Big\rangle \leqslant
\Vert \nabla L(\tilde{x}^{(t)}) \Vert \\&
\left[ \left(
A \gamma \ +\frac{\rho A (1+ \gamma)}{\overline{\mu}}\right) \Vert \nabla L(\tilde{x}^{(t)}) \Vert \right]
\\& =  \left(
A \gamma \ +\frac{\rho A (1+ \gamma)}{\overline{\mu}}\right) \Vert \nabla L(\tilde{x}^{(t)}) \Vert ^2
\label{eq:41}
\end{split}
\end{equation}

\noindent Substituting \eqref{eq:38} and \eqref{eq:41} into \eqref{eq:32} yields
\begin{equation}
\begin{split}
&\mathbb{E}\lbrace L(\tilde{x}^{(t+1)}) -  L(\tilde{x}^{(t)})\rbrace \leqslant  
-\frac{1}{\mu} \Vert \nabla L(\tilde{x}^{(t)}) \Vert ^2 
\\& + \left(
\frac{A \gamma}{\mu}  +\frac{\rho A (1+ \gamma)}{\mu \overline{\mu}}\right) \Vert \nabla L(\tilde{x}^{(t)}) \Vert ^2 \\
& + \mathbb{E}\lbrace \Vert \nabla L(\tilde{x}^{(t)}) \Vert 
\Vert n^{(t+1)} \Vert \rbrace\\
&+ \frac{\rho}{2} \mathbb{E} \left\lbrace \left[ \frac{A (1+ \gamma)}{\overline{\mu}} \Vert \nabla L(\tilde{x}^{(t)}) \Vert +\Vert n^ {(t+1)} \Vert \right]^2 \right\rbrace.
\label{eq:42}
\end{split}
\end{equation}
Then, we obtain
\begin{multline}
\mathbb{E}\lbrace L(\tilde{x}^{(t+1)}) -  L(\tilde{x}^{(t)})\rbrace \leqslant \lambda _2  \Vert L(\tilde{x}^{(t)})\Vert ^2\\
+ \lambda _1 \mathbb{E} \lbrace \Vert n^ {(t+1)} \Vert \rbrace  \Vert L(\tilde{x}^{(t)})\Vert 
+\lambda _0 \mathbb{E} \lbrace \Vert n^ {(t+1)} \Vert ^2 \rbrace,
\label{eq:43}
\end{multline}
where
\begin{equation*}
\lambda_2 = -\frac{1}{\mu} +  \frac{A}{\mu} \left( \gamma  + \frac{ \rho (1+ \gamma)}{ \overline{\mu}} \right) + 
\frac{\rho A ^2 {(1+ \gamma)}^2 }{2 {\overline{\mu}}^2},
\end{equation*}
\begin{equation*}
\lambda_1 = 1+ \frac{\rho A (1+ \gamma)}{\overline{\mu}} 
\text{ and }
\lambda _0 = \frac{\rho}{2}
\end{equation*}
This completes the proof.
\end{proof}

As expected, lemma $2$ indicates the adverse effect of differential privacy in the expected per-iteration increment of the global loss value. ....

As the final step, we use the per-iteration increment to establish the convergence analysis of the proposed algorithm.
\begin{theorem}[Convergence upper bound of personalized ....]
The upper limit of the difference between the $T$-th and the optimal loss function values defined as the convergence property is given by

\begin{equation}
\begin{split}
& \mathbb{E}\lbrace{L(\tilde{x}^{(T)}) - L(x^{*})}\rbrace \leqslant  \Theta + k_2 T + \frac{k_1 T^{2}}{\epsilon} + \frac{k_0 T^{3}}{\epsilon ^2},
\label{eq:44}
\end{split}
\end{equation}
where $k_{2} = \lambda_{2} \beta^{2},\, k_{1}= \frac{2 \lambda_{1} \beta B c \max \lbrace p_i \rbrace}{\max \lbrace m_i \rbrace} \sqrt{\frac{2N}{\pi}}, \text{ and } k_{0} = \frac{4 \lambda_{0} B^{2} c^{2}{\max \lbrace p_i \rbrace}^2}{{\max \lbrace m_i \rbrace}^2}$.
\end{theorem}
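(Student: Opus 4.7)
The plan is to telescope the per-iteration bound of Lemma~2 across the $T$ global rounds and then substitute the aggregated noise variance from Theorem~1 to read off the explicit $T$- and $\epsilon$-dependence. Summing \eqref{eq:21} from $t=0$ to $t=T-1$ collapses the left-hand side to $\mathbb{E}\{L(\tilde{x}^{(T)}) - L(\tilde{x}^{(0)})\}$; adding $L(\tilde{x}^{(0)}) - L(x^{*}) = \Theta$ from part~3 of Assumption~1 to both sides moves the target quantity $\mathbb{E}\{L(\tilde{x}^{(T)}) - L(x^{*})\}$ to the left and leaves a $+\Theta$ on the right. The remaining task is to bound the three accumulated sums $\sum_{t=0}^{T-1}\lambda_2\|L(\tilde{x}^{(t)})\|^2$, $\sum_{t=0}^{T-1}\lambda_1 \mathbb{E}\{\|n^{(t+1)}\|\}\|L(\tilde{x}^{(t)})\|$, and $\sum_{t=0}^{T-1}\lambda_0 \mathbb{E}\{\|n^{(t+1)}\|^2\}$.

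For the first sum I would invoke a uniform trajectory bound $\|L(\tilde{x}^{(t)})\| \leqslant \beta$ (the role played by the symbol $\beta$ in the statement), which collapses the sum to $\lambda_2\beta^2 T = k_2 T$. For the noise sums I would use Theorem~1 directly: the aggregated round-$t$ perturbation $n^{(t+1)} = \sum_{i=1}^{N} p_i n_i^{(t+1)} + n_s^{(t+1)}$ is zero-mean Gaussian with variance $\sigma_A^2 = \bigl(2BcT \max\{p_i\}/(\min\{m_i\}\epsilon)\bigr)^{2}$, which is precisely the quantity engineered in \eqref{eq12}. The Gaussian absolute-moment identity $\mathbb{E}|X| = \sigma\sqrt{2/\pi}$ applied componentwise to the $N$-dimensional weight vector yields $\mathbb{E}\{\|n^{(t+1)}\|\} \leqslant \sigma_A \sqrt{2N/\pi}$, scaling as $T/\epsilon$, and $\mathbb{E}\{\|n^{(t+1)}\|^{2}\} \leqslant \sigma_A^{2}$, scaling as $T^{2}/\epsilon^{2}$. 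Both bounds are $t$-independent, so summation over $T$ rounds contributes an extra factor of $T$; collecting the constants $\lambda_1 \beta \cdot 2Bc\max\{p_i\}/\max\{m_i\} \cdot \sqrt{2N/\pi}$ reproduces $k_1 T^{2}/\epsilon$, and $\lambda_0 \cdot 4B^{2}c^{2}\max\{p_i\}^{2}/\max\{m_i\}^{2}$ reproduces $k_0 T^{3}/\epsilon^{2}$.

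The main obstacle I expect is the bookkeeping of the dimensional and normalization constants: the $\sqrt{2N/\pi}$ factor in $k_1$ has to emerge cleanly from applying the scalar Gaussian absolute moment coordinate by coordinate over an $N$-dimensional weight vector, and the passage from $\min\{m_i\}$ in the noise SD of Theorem~1 to $\max\{m_i\}$ appearing in the $k_i$'s must be reconciled by a straightforward (but easy-to-miscount) inequality. A secondary subtlety is the status of $\beta$ itself: a uniform a priori upper bound on $\|\nabla L(\tilde{x}^{(t)})\|$ (the quantity that Lemma~2 really controls) along the entire noisy trajectory is implicitly required, even though it is not explicitly in Assumption~1, and this must be stated as an additional regularity hypothesis for the telescoping to convert the quadratic per-iteration term into the linear-in-$T$ contribution $k_2 T$.
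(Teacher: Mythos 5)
Your proposal follows essentially the same route as the paper's own proof: apply the per-iteration bound of Lemma~2 recursively over $T$ rounds, invoke the uniform bound $\Vert L(\tilde{x}^{(t)})\Vert \leqslant \beta$, add $\Theta$ from Assumption~1 to both sides, and substitute the Gaussian moments $\mathbb{E}\{\Vert n\Vert\} = \sigma_A\sqrt{2N/\pi}$ and $\mathbb{E}\{\Vert n\Vert^2\}$ with $\sigma_A = \Delta f\, T c/\epsilon$ to obtain the $T^2/\epsilon$ and $T^3/\epsilon^2$ terms. The two subtleties you flag are real but are also glossed over in the paper itself, which simply asserts the trajectory bound $\beta$ without listing it as a hypothesis and silently sets $\Delta f = 2B\max\{p_i\}/\max\{m_i\}$ despite the sensitivity bound \eqref{eq8} and Theorem~1 using $\min\{m_i\}$ in the denominator.
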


\begin{proof}
Considering the same and independent noise distribution of the additive noise, we define $\mathbb{E}\lbrace \Vert n^{(t)} \Vert \rbrace = \mathbb{E}\lbrace \Vert n \Vert \rbrace  \text{ and }  \mathbb{E}\lbrace \Vert n^{(t)} \Vert ^{2} \rbrace = \mathbb{E}\lbrace \Vert n \Vert ^2 \rbrace$. Applying \eqref{eq:43} recursively for $0 \leqslant t \leqslant T$ yields

\begin{multline}
\mathbb{E}\lbrace{L(\tilde{x}^{(T)}) - L(\tilde{x}^{(0)})}\rbrace \leqslant  T \lambda_{2} \Vert L(\tilde{x}^{(t)})\Vert ^2 \\
+ T \lambda_{1} \Vert L(\tilde{x}^{(t)})\Vert \mathbb{E}\lbrace \Vert n \Vert \rbrace 
+ T \lambda_{0} \mathbb{E}\lbrace \Vert n \Vert ^{2} \rbrace, \label{eq:45}
\end{multline}

\noindent Considering $\Vert L(\tilde{x}^{(t)})\Vert \leqslant \beta$ and  Adding $\mathbb{E} \lbrace L(\tilde{x}^{(0)}) - L(x^{*}) \rbrace$ to both sides of \eqref{eq:45}, we have
\begin{multline}
\mathbb{E}\lbrace{L(\tilde{w}^{(T)}) - L(w^{*})}\rbrace \leqslant \Theta + \lambda_{2} T \beta ^{2} \\
+ \lambda_{1} T \beta \mathbb{E}\lbrace \Vert n \Vert \rbrace 
+ \lambda_{0} T \mathbb{E}\lbrace \Vert n \Vert ^{2} \rbrace , \label{eq:46}
\end{multline}
Since we have $\sigma _A =\frac{\Delta f T c}{\epsilon}$, we obtain
\begin{equation}
\mathbb{E}\lbrace \Vert n \Vert \rbrace = \frac{\Delta f Tc}{\epsilon}\sqrt{\frac{2N}{\pi}} \text{ and } 
\mathbb{E}\lbrace \Vert n \Vert ^2 \rbrace = \frac{\Delta f ^2 T ^2 c ^2 N}{\epsilon ^2}.
\label{eq:47}
\end{equation}

\noindent Setting $\Delta f= 2B\frac{\max \lbrace p_i \rbrace}{\max \lbrace m_i \rbrace}$ and substituting \eqref{eq:47} into \eqref{eq:46}, we have
\begin{equation}
\begin{split}
& \mathbb{E}\lbrace{L(\tilde{x}^{(T)}) - L(x^{*})}\rbrace \leqslant \Theta + \lambda_{2} T \beta ^{2} \\
& +  \frac{2 \lambda_{1} T ^2 \beta B c \max \lbrace p_i \rbrace}{\epsilon \max \lbrace m_i \rbrace} \sqrt{\frac{2N}{\pi}}
+ \frac{4 \lambda_{0} T ^3 B^2 c^2 {\max \lbrace p_i \rbrace} ^2}{\epsilon ^2 {\max \lbrace m_i \rbrace}^2} \\
& = \Theta + k_2 T + \frac{k_1 T^{2}}{\epsilon} + \frac{k_0 T^{3}}{\epsilon ^2},
\label{eq:48}
\end{split}
\end{equation}
where $k_{2} = \lambda_{2} \beta^{2}, k_{1}= \frac{2 \lambda_{1} \beta B c \max \lbrace p_i \rbrace}{\max \lbrace m_i \rbrace} \sqrt{\frac{2N}{\pi}}, \text{ and } k_{0} = \frac{4 \lambda_{0} B^{2} c^{2}{\max \lbrace p_i \rbrace}^2}{{\max \lbrace m_i \rbrace}^2}$.
This completes the proof.
\end{proof}

The last two terms in the right hand side of \eqref{eq:44} depend directly on the amount of  noise. lower $\epsilon$ values strengthen the privacy protection and adversely affect the convergence property. The first two terms, however, are the constant parts depending on the number of iterations. ...

In the above analysis, we saw that by a wise choice of impact factors, $T$, and $N$ we can be confident about the convergence of the FL algorithm while $(\epsilon, \delta)$-DP is used. The number of clients involved in learning in the presented analysis should not necessarily be fixed through training, and this enhances the compatibility of the proposed approach. In the next section, we present the analysis of the same algorithm when impact factors adaptively change throughout the learning process. 

\section{Convergence Analysis of \\ DP in FL with adaptive impact factors}
In this section, we consider an extension to the previous part when impact factors are not fixed during the training. In fact, impacts assigned to clients can vary in each iteration based on the devises' resources or network conditions. The calculated amount of Gaussian noise in section $3$ can still be utilized here, since iterations are independent in noise generation. However, the convergence analysis provided in the previous section needs to be more generalized.

Here, we change $p_i$ to $p_i ^{(t)}$ to represent this adaptability in our equations. Without loss of generality, we assume the relation between two consecutive impact factors to be
\begin{equation}
p_i ^{(t+1)}= p_i ^{(t)} + \alpha _i ^{(t)} \quad \forall i,
\label{eq:49}
\end{equation}
where $\alpha _ i ^{(t)}$ is the amount of change that the relative impact factor assigned to $i$-th client undergoes for $(t+1)$-th iteration. Hence, $\vert \alpha _i \vert \leqslant 1$ and $\sum _{i=1} ^ N \alpha _i ^{(t)}=0 $. 

In order to perform the analysis of the adaptive form, we first present an extension to lemma $2$ and then present the convergence upper bound in theorem $3$.

\begin{lemma}[Per-iteration expected increment: Extension]
The per-iteration expected increment in the value of the loss function, when adaptive $p_i$ is adopted, has the following upper limit: 
\begin{multline}
\mathbb{E}\lbrace L(\tilde{x}^{(t+1)}) -  L(\tilde{x}^{(t)})\rbrace \leqslant \\ \lambda^{\prime} _2  \Vert L(\tilde{x}^{(t)})\Vert ^2
+ \lambda^{\prime} _1 \mathbb{E} \lbrace \Vert n^ {(t+1)} \Vert \rbrace  \Vert L(\tilde{x}^{(t)})\Vert \\
+\lambda^{\prime} _0 \mathbb{E} \lbrace \Vert n^ {(t+1)} \Vert ^2 \rbrace + \frac{1}{2} \max\{l_i\},
\label{eq:50}
\end{multline}

\noindent where
\begin{equation*}
\lambda^{\prime}_2 = -\frac{1}{\mu} +  \frac{A^{\prime}}{\mu} \left( \gamma  + \frac{ \rho (1+ \gamma)}{ \overline{\mu}} \right) + 
\frac{\rho {A^{\prime}} ^2 {(1+ \gamma)}^2 }{2 {\overline{\mu}}^2},
\end{equation*}
\begin{equation*}
\lambda^{\prime}_1 = 1+ \frac{\rho A ^{\prime}(1+ \gamma)}{\overline{\mu}},
\lambda^{\prime} _0 = \frac{\rho}{2},
\end{equation*}
and $n^{(t)}= \sum _{i=1} ^{N} p_i n_i ^{(t)} + n_s ^{(t)}$ is the aggregated noise of the clients and server in each cycle.
\end{lemma}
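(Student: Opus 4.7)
The plan is to mirror the proof of Lemma~2 step by step with the time-$t$ weights $p_i^{(t)}$ in place of the fixed $p_i$, and then add a separate correction for the drift $p_i^{(t+1)}=p_i^{(t)}+\alpha_i^{(t)}$ of the aggregation coefficients. First I would define the instantaneous global loss $L^{(t)}(x):=\sum_{i=1}^N p_i^{(t)} l_i(x)$ and note that $L^{(t+1)}(x)-L^{(t)}(x)=\sum_{i=1}^N \alpha_i^{(t)} l_i(x)$. This motivates the decomposition
\begin{equation*}
L^{(t+1)}(\tilde{x}^{(t+1)})-L^{(t)}(\tilde{x}^{(t)}) = \bigl[L^{(t)}(\tilde{x}^{(t+1)})-L^{(t)}(\tilde{x}^{(t)})\bigr] + \sum_{i=1}^N \alpha_i^{(t)} l_i(\tilde{x}^{(t+1)}),
\end{equation*}
where the first bracket is the ``update drift'' at frozen weights and the second is the ``weight drift'' produced by re-weighting.

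To bound the update drift I would replay \eqref{eq:24}--\eqref{eq:42} verbatim with $p_i$ replaced by $p_i^{(t)}$. The only prerequisite is to refresh Lemma~1 under time-varying weights: since $\sum_i p_i^{(t)}=1$ holds at every $t$, the argument producing $A$ goes through unchanged and yields a (uniform-in-$t$) constant $A'\geqslant 1$ with $\sum_i p_i^{(t)}\|\nabla l_i(x)\|\leqslant A'\|\nabla L^{(t)}(x)\|$. The FedProx $\gamma$-inexactness bounds \eqref{eq:34}--\eqref{eq:37}, the $\rho$-Lipschitz smoothness estimate, the triangle inequality leading to \eqref{eq:38}, and the Cauchy--Schwarz step \eqref{eq:41} all carry through with $A$ replaced by $A'$. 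This produces exactly the $\lambda'_2\|\nabla L(\tilde{x}^{(t)})\|^2+\lambda'_1\mathbb{E}\{\|n^{(t+1)}\|\}\|\nabla L(\tilde{x}^{(t)})\|+\lambda'_0\mathbb{E}\{\|n^{(t+1)}\|^2\}$ portion of the claimed bound, with $\lambda'_0,\lambda'_1,\lambda'_2$ having the same functional form as their Lemma~2 counterparts.

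For the weight drift I would exploit $\sum_i \alpha_i^{(t)}=0$ together with $p_i^{(t)},p_i^{(t+1)}\in[0,1]$. Splitting the sum by sign and using nonnegativity of the local losses on the negative piece,
\begin{equation*}
\sum_{i=1}^N \alpha_i^{(t)} l_i(\tilde{x}^{(t+1)}) \leqslant \sum_{\alpha_i^{(t)}>0}\alpha_i^{(t)}\,l_i(\tilde{x}^{(t+1)}) \leqslant S_+\max_i l_i,
\end{equation*}
where $S_+:=\sum_{\alpha_i^{(t)}>0}\alpha_i^{(t)}$ is the total variation between the probability vectors $p^{(t)}$ and $p^{(t+1)}$. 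Invoking $|\alpha_i^{(t)}|\leqslant 1$ jointly with the two simplex constraints gives $S_+\leqslant \tfrac{1}{2}$, delivering the additive $\tfrac{1}{2}\max\{l_i\}$ term; assembling the two pieces and taking expectations then yields \eqref{eq:50}.

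I expect the main obstacle to be ensuring that the constant $A'$ in the extension of Lemma~1 is uniform in $t$ rather than a sequence $A'^{(t)}$, since the subsequent convergence theorem will apply the per-iteration bound recursively and needs constant $\lambda'_j$. A secondary subtlety is that the natural quantity on the left-hand side of \eqref{eq:50} is $L^{(t+1)}(\tilde{x}^{(t+1)})-L^{(t)}(\tilde{x}^{(t)})$ rather than a single $L(\cdot)$ evaluated at two points; keeping this bookkeeping explicit is exactly what makes the weight-drift term appear cleanly and prevents it from being silently absorbed into the gradient terms.
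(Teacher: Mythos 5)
Your overall strategy coincides with the paper's: you split the increment into an ``update drift at frozen weights'' plus a ``weight drift'' $\sum_i\alpha_i^{(t)}l_i(\tilde{x}^{(t+1)})$ (the paper does exactly this in \eqref{eq:56}--\eqref{eq:57}), replay the Lemma~2 machinery with a modified dissimilarity constant $A^{\prime}$ (the paper's \eqref{eq:51}--\eqref{eq:55} and \eqref{eq:60}--\eqref{eq:65}), and bound the drift term by $\tfrac{1}{2}\max\{l_i\}$. The concern you flag about $A^{\prime}$ needing to be uniform in $t$ is well placed: the paper's own $A^{\prime}$ in \eqref{eq:54} explicitly depends on $\alpha_i^{(t)}$ and $x^{(t)}$ and is silently treated as a constant afterwards, so on this point your proposal is no worse than (and more honest than) the original.

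The genuine gap is in your bound on the weight drift. From $|\alpha_i^{(t)}|\leqslant 1$, $\sum_i\alpha_i^{(t)}=0$, and the two simplex constraints you conclude $S_+=\sum_{\alpha_i^{(t)}>0}\alpha_i^{(t)}\leqslant\tfrac{1}{2}$. This is false: $S_+$ is the total variation distance between the probability vectors $p^{(t)}$ and $p^{(t+1)}$, which is bounded by $1$, not $\tfrac{1}{2}$; take $N=2$, $p^{(t)}=(1,0)$, $p^{(t+1)}=(0,1)$, so that $\alpha^{(t)}=(-1,1)$ satisfies every stated constraint and $S_+=1$. Your deterministic argument therefore only delivers $\max\{l_i\}$, twice the claimed additive term, unless you impose an extra restriction such as $\sum_i|\alpha_i^{(t)}|\leqslant 1$. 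The paper reaches $\tfrac{1}{2}\max\{l_i\}$ by a different (and itself debatable) route in \eqref{eq:58}: it posits $\mathbb{E}\{l_i(x^{(t)})\}=\tfrac{1}{N}l_i(x^{(t)})$, so the drift term acquires a factor $\tfrac{1}{N}$, and then uses $\sum_{\alpha_i^{(t)}>0}\alpha_i^{(t)}=\tfrac{1}{2}\sum_i|\alpha_i^{(t)}|\leqslant\tfrac{N}{2}$, which does follow from $|\alpha_i^{(t)}|\leqslant 1$ and $\sum_i\alpha_i^{(t)}=0$ alone. So either adopt the paper's expectation convention, restrict the per-round drift of the weights, or accept the weaker constant; as written your step does not close.
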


\begin{proof}
From \eqref{eq:13} we have
\begin{equation}
\sum _{i=1} ^{N} p_i ^{(t)} \Vert \nabla l_i(x ^{(t)}) \Vert \leqslant \Vert \nabla L(x^{(t)}) \Vert  A.
\label{eq:51}
\end{equation}

\noindent Adding $\sum _{i=1} ^{N} \alpha _i ^{(t)} \Vert \nabla l_i (x ^{(t)}) \Vert$ to both sides of \eqref{eq:51} yields
\begin{multline}
\sum _{i=1} ^{N} p_i ^{(t+1)} \Vert \nabla l_i(x ^{(t)}) \Vert \leqslant \\ \sum _{i=1} ^{N} \alpha _i ^{(t)} \Vert \nabla l_i (x ^{(t)}) \Vert +  \Vert \nabla L(x^{(t)}) \Vert  A
\label{eq:52}
\end{multline}
Hence, we have
\begin{equation}
\sum _{i=1} ^{N} p_i ^{(t+1)} \Vert \nabla l_i(x ^{(t)}) \Vert \leqslant \Vert \nabla L(x^{(t)}) \Vert  A^{\prime},
\label{eq:53}
\end{equation}
where
\begin{equation}
A^{\prime} = 
\frac{\sum _{i=1} ^{N} \alpha _i ^{(t)} \Vert \nabla l_i (x ^{(t)}) \Vert}{\Vert \nabla L (x ^{(t)}) \Vert}  + A.
\label{eq:54}
\end{equation}
Therefore, we can bound $\Vert \tilde{x}^{(t+1)} - \tilde{x}^{(t)}\Vert$ as

\begin{equation}
\begin{split}
\Vert \tilde{x}^{(t+1)} &- \tilde{x}^{(t)}\Vert
 = \Vert x^{(t+1)} + n ^ {(t+1)} - \tilde{x}^{(t)}\Vert   \\  & \leqslant
\Vert x^{(t+1)} - \tilde{x}^{(t)}\Vert +\Vert n^ {(t+1)} \Vert \\
& = \left\Vert \sum _{i=1} ^ N p_i ^{(t+1)} ( x_i ^{(t+1)} - \tilde{x}^{(t)})\right\Vert +\Vert n^ {(t+1)} \Vert \\
& \leqslant
 \sum _{i=1} ^ N p_i^{(t+1)} \Vert  x_i ^{(t+1)} - \tilde{x}^{(t)} \Vert +\Vert n^ {(t+1)} \Vert \\
& \leqslant
 \sum _{i=1} ^ N p_i^{(t+1)} \left( \frac{1+ \gamma}{\overline{\mu}} \Vert \nabla l_i(\tilde{x}^{(t)}) \Vert \right) +\Vert n^ {(t+1)} \Vert \\
& \leqslant
\frac{A ^{\prime}(1+ \gamma)}{\overline{\mu}} \Vert \nabla L(\tilde{x}^{(t)}) \Vert +\Vert n^ {(t+1)} \Vert. 
\label{eq:55}
\end{split}
\end{equation}

Summation of \eqref{eq:24} multiplied with $p_i^{(t)}, \forall i$ yields
\begin{equation}
\begin{split}
&\sum _{i=1} ^{N} p_i^{(t)} l_i(\tilde{x}^{(t+1)}) \sum _{i=1} ^{N} p_i^{(t)} l_i(\tilde{x}^{(t)}) \\ &+
\sum _{i=1} ^{N} p_i^{(t)} \nabla l_i(\tilde{x}^{(t)})^ \top (\tilde{x}^{(t+1)} - \tilde{x}^{(t)}) 
\\ &+ \frac{\rho}{2} \Vert \tilde{x}^{(t+1)} - \tilde{x}^{(t)}\Vert ^2 \sum _{i=1} ^{N} p_i ^{(t)}.
\label{eq:56}
\end{split}
\end{equation}

\noindent Considering \eqref{eq:49}, we have
\begin{equation}
\begin{split}
& L(\tilde{x}^{(t+1)}) -  L(\tilde{x}^{(t)})\leqslant  
\sum _{i=1} ^{N} \alpha_i^{(t)} l_i (\tilde{x}^{(t+1)})
\\& +
 \left\langle \nabla L(\tilde{x}^{(t)}), (\tilde{x}^{(t+1)} - \tilde{x}^{(t)})  \right\rangle
+ \frac{\rho}{2} \lbrace \Vert \tilde{x}^{(t+1)} - \tilde{x}^{(t)}\Vert ^2 \rbrace.
\label{eq:57}
\end{split}
\end{equation}

Without loss of generality, we assume $\mathbb{E}\{l_i (x ^{(t)})\} = \frac{1}{N} l_i (x ^{(t)})$, and therefore
\begin{equation}
\begin{split}
\mathbb{E}\left \{ \sum _{i=1} ^N \alpha _i ^{(t)} l_i (x ^{(t)}) \right \} &= \sum _{i=1} ^N \alpha _i ^{(t)} \mathbb{E}\left \{ l_i (x ^{(t)}) \right \} \\ &=
\sum _{i=1} ^N \alpha _i ^{(t)} \left( \frac{1}{N} l_i (x ^{(t)}) \right) \\ & \leqslant \frac{1}{2} \max \{l_i  (x ^{(t)})\}
\end{split}
\label{eq:58}
\end{equation}

\noindent Then, \eqref{eq:57} and \eqref{eq:58} gives
\begin{multline}
\mathbb{E} \left \lbrace L(\tilde{x}^{(t+1)})  -  L(\tilde{x}^{(t)}) \right \rbrace \leqslant \\
\frac{1}{2} \max \{l_i\} +
\mathbb{E} \left\lbrace \left\langle \nabla L(\tilde{x}^{(t)}), (\tilde{x}^{(t+1)} - \tilde{x}^{(t)})  \right\rangle \right\rbrace
\\+
\frac{\rho}{2} \mathbb{E} \left \lbrace \Vert \tilde{x}^{(t+1)} - \tilde{x}^{(t)}\Vert ^2 \right \rbrace.
\label{eq:59}
\end{multline}

Defining $h(\cdot)$ as \eqref{eq:28} and Summation of \eqref{eq:29} multiplied with $p_i^{(t+1)},\, \forall i$ yields 
\begin{multline}
\sum _{i=1} ^{N} p_i^{(t+1)} \nabla h(x_i ^{(t+1)};\tilde{x} ^{(t)}) =
\sum _{i=1} ^{N} p_i ^{(t+1)}\nabla l_i(x_i ^{(t+1)})\\ 
+ \mu \sum _{i=1} ^{N} p_i^{(t+1)} (x_i^{(t+1)} - \tilde{x}^{(t)})=
\sum _{i=1} ^{N} p_i^{(t+1)} \nabla l_i(x_i ^{(t+1)})\\ 
+ \mu \sum _{i=1} ^{N} p_i^{(t+1)} x_i^{(t+1)} - \mu \tilde{x}^{(t)}
\label{eq:60}
\end{multline}

\noindent and therefore,
\begin{equation}
\begin{split}
&\tilde{x}^{(t+1)} - \tilde{x}^{(t)}=
\sum _{i=1} ^{N} p_i ^{(t+1)} x_i^{(t+1)} + n^{(t+1)} - \tilde{x}^{(t)}\\ &=
\frac{1}{\mu}\left[ \sum _{i=1} ^{N} p_i^{(t+1)} \left( \nabla h(x_i ^{(t+1)};\tilde{x} ^{(t)}) - \nabla l_i(x_i ^{(t+1)}) \right) \right]
\\ & +  n^{(t+1)}.
\label{eq:61}
\end{split}
\end{equation}

\noindent Substituting \eqref{eq:61} into \eqref{eq:59}, we obtain
\begin{equation}
\begin{split}
&\mathbb{E} \lbrace  L(\tilde{x}^{(t+1)}) -  L(\tilde{x}^{(t)})\rbrace \leqslant   
\mathbb{E}  \Bigg\lbrace  \frac{1}{\mu} \Big\langle \nabla L(\tilde{x}^{(t)}),\\
&\sum _{i=1} ^{N} p_i^{(t+1)} \left( \nabla h(x_i ^{(t+1)};\tilde{x} ^{(t)} ) - \nabla l_i(x_i ^{(t+1)})\right) \Big\rangle \\ 
&  + \left\langle \nabla L(\tilde{x}^{(t)}), n^{(t+1)}  \right\rangle \Bigg\rbrace
+ \frac{\rho}{2} \mathbb{E} \left\lbrace \Vert \tilde{x}^{(t+1)} - \tilde{x}^{(t)}\Vert ^2 \right\rbrace \\ &
+ \frac{1}{2}\max\{l_i\}
=
\mathbb{E}  \Bigg\lbrace  \frac{1}{\mu} \Big\langle \nabla L(\tilde{x}^{(t)}), \\&
\sum _{i=1} ^{N} p_i^{(t+1)} \left(  \nabla h(x_i ^{(t+1)};\tilde{x} ^{(t)} )  - \nabla l_i(x_i ^{(t+1)}) \right) \\& +
\sum _{i=1} ^{N}  p_i^{(t)} \nabla l_i(\tilde{x} ^{(t)})  - \sum _{i=1} ^{N} p_i^{(t)} \nabla l_i(\tilde{x} ^{(t)}) \Big\rangle \\ 
& + \Big\langle \nabla L(\tilde{x}^{(t)}), n^{(t+1)}  \Big\rangle \Bigg\rbrace + \frac{\rho}{2} \mathbb{E} \left\lbrace \Vert \tilde{x}^{(t+1)} - \tilde{x}^{(t)}\Vert ^2 \right\rbrace 
\\ 
& + \frac{1}{2}\max\{l_i\} = -\frac{1}{\mu} \Vert \nabla L(\tilde{x}^{(t)}) \Vert ^2 + \mathbb{E}  \Bigg\lbrace  \frac{1}{\mu} \Big\langle \nabla L(\tilde{x}^{(t)}), \\ &
\sum _{i=1} ^{N} p_i ^{(t+1)} \nabla h(x_i ^{(t+1)};\tilde{x} ^{(t)} )- \nabla L(x_i ^{(t+1)}) \\& 
+ \nabla L(\tilde{x} ^{(t)}) \Big \rangle \Bigg\rbrace
+ \mathbb{E}\left\lbrace \Big\langle \nabla L(\tilde{x}^{(t)}), n^{(t+1)}  \Big\rangle \right\rbrace \\&
+ \frac{\rho}{2} \mathbb{E} \left\lbrace \Vert \tilde{x}^{(t+1)} - \tilde{x}^{(t)}\Vert ^2 \right\rbrace 
+ \frac{1}{2}\max\{l_i\}
\label{eq:62}
\end{split}
\end{equation}

$\rho$-Lipschitzity of local loss functions leads to have a $\rho$-Lipschitz global loss function. Hence,
\begin{equation}
\Vert
\nabla L (\tilde{x}^{(t)} - \nabla L (x_i ^{(t+1)} )\Vert
\leqslant \rho \Vert \tilde{x}^{(t)} - x_i ^{(t+1)} \Vert 
\label{eq:63}
\end{equation}

\noindent Therefore, using triangle inequality, \eqref{eq:53}, and \eqref{eq:63} we obtain
\begin{equation}
\begin{split}
& \left \Vert \sum _{i=1} ^{N} p_i ^{(t+1)} \nabla h(x_i ^{(t+1)};\tilde{x} ^{(t)} )+ \nabla L(\tilde{x} ^{(t)}) - \nabla L(x_i ^{(t+1)}) \right \Vert
 \\& \leqslant
 \left \Vert \sum _{i=1} ^{N} p_i ^{(t+1)} \nabla h(x_i ^{(t+1)};\tilde{x} ^{(t)}) \right \Vert  + \left \Vert\sum _{i=1} ^{N} p_i ^{(t+1)} \left( \nabla L(\tilde{x} ^{(t)})\right. \right.
 \\& \left. \left. - \nabla L(x_i ^{(t+1)}) \right) \right \Vert 
\leqslant \left( A ^{\prime} \gamma + \frac{\rho A ^{\prime} (1+ \gamma)}{\overline{\mu}} \right) \Vert \nabla L (\tilde{x}^{(t)}) \Vert
\label{eq:64} 
\end{split}
\end{equation}

\noindent Substituting \eqref{eq:64} and \eqref{eq:62} into \eqref{eq:32} yields
\begin{equation}
\begin{split}
&\mathbb{E}\lbrace L(\tilde{x}^{(t+1)}) -  L(\tilde{x}^{(t)})\rbrace \leqslant  
-\frac{1}{\mu} \Vert \nabla L(\tilde{x}^{(t)}) \Vert ^2 
\\& + \left(
\frac{A^{\prime} \gamma}{\mu}  +\frac{\rho A^{\prime} (1+ \gamma)}{\mu \overline{\mu}}\right) \Vert \nabla L(\tilde{x}^{(t)}) \Vert ^2 \\
& + \mathbb{E}\lbrace \Vert \nabla L(\tilde{x}^{(t)}) \Vert 
\Vert n^{(t+1)} \Vert \rbrace\\
&+ \frac{\rho}{2} \mathbb{E} \left\lbrace \left[ \frac{A ^{\prime}(1+ \gamma)}{\overline{\mu}} \Vert \nabla L(\tilde{x}^{(t)}) \Vert +\Vert n^ {(t+1)} \Vert \right]^2 \right\rbrace \\ & + \frac{1}{2}\max\{l_i\}.
\label{eq:65}
\end{split}
\end{equation}

\noindent And, we get
\begin{multline}
\mathbb{E}\lbrace L(\tilde{x}^{(t+1)}) -  L(\tilde{x}^{(t)})\rbrace \leqslant \\ \lambda^{\prime} _2  \Vert L(\tilde{x}^{(t)})\Vert ^2
+ \lambda^{\prime} _1 \mathbb{E} \lbrace \Vert n^ {(t+1)} \Vert \rbrace  \Vert L(\tilde{x}^{(t)})\Vert \\
+\lambda^{\prime} _0 \mathbb{E} \lbrace \Vert n^ {(t+1)} \Vert ^2 \rbrace + \frac{1}{2} \max\{l_i\},
\label{eq:66}
\end{multline}

\noindent where
\begin{equation*}
\lambda^{\prime}_2 = -\frac{1}{\mu} +  \frac{A^{\prime}}{\mu} \left( \gamma  + \frac{ \rho (1+ \gamma)}{ \overline{\mu}} \right) + 
\frac{\rho {A^{\prime}} ^2 {(1+ \gamma)}^2 }{2 {\overline{\mu}}^2},
\end{equation*}
\begin{equation*}
\lambda^{\prime}_1 = 1+ \frac{\rho A ^{\prime}(1+ \gamma)}{\overline{\mu}},
\lambda^{\prime} _0 = \frac{\rho}{2}.
\end{equation*}
This completes the proof.
\end{proof}

\begin{theorem}[Convergence upper bound of adaptive personalized ....]
Using adaptive $p_i$ assignment, the upper limit of the difference between the $T$-th and the optimal loss function values defined as the convergence property is given by
\begin{equation}
\begin{split}
& \mathbb{E}\lbrace{L(\tilde{x}^{(T)}) - L(x^{*})}\rbrace \leqslant  \Theta + k^{\prime}_2 T + \frac{k_1^{\prime} T^{2}}{\epsilon} + \frac{k_0^{\prime} T^{3}}{\epsilon ^2},
\label{eq:67}
\end{split}
\end{equation}

\noindent where $k^{\prime}_{2} = \lambda^{\prime}_{2} \beta^{2} + \frac{\max\{l_i\}}{2} ,\, k^{\prime}_{1}= \frac{2 \lambda^{\prime}_{1} \beta B c \max \lbrace p_i \rbrace}{\max \lbrace m_i \rbrace} \sqrt{\frac{2N}{\pi}}, \text{ and } k^{\prime}_{0} = \frac{4 \lambda^{\prime}_{0} B^{2} c^{2}{\max \lbrace p_i \rbrace}^2}{{\max \lbrace m_i \rbrace}^2}$.
\end{theorem}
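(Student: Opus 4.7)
The plan is to mirror the proof of Theorem 2 verbatim, substituting the sharper per-iteration bound from Lemma 3 in place of that from Lemma 2, and then to carefully track the single additional term $\tfrac{1}{2}\max\{l_i\}$ that arises from the adaptivity of the impact factors. Because Lemma 3 has exactly the same structure as Lemma 2 except for the primed constants and this extra additive term, essentially all the algebra of Theorem 2 can be reused.

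First I would apply (66) recursively for $t=0,1,\ldots,T-1$ and telescope the left-hand side to obtain $\mathbb{E}\{L(\tilde{x}^{(T)})-L(\tilde{x}^{(0)})\}$. As in Theorem 2, I would assume the per-iteration additive noise is i.i.d.\ across $t$ so that $\mathbb{E}\{\|n^{(t)}\|\}$ and $\mathbb{E}\{\|n^{(t)}\|^2\}$ collapse to $\mathbb{E}\{\|n\|\}$ and $\mathbb{E}\{\|n\|^2\}$, and invoke the uniform bound $\|\nabla L(\tilde{x}^{(t)})\|\leq\beta$. Each of the three original terms on the right of (66) then contributes a multiple of $T$, while the new $\tfrac{1}{2}\max\{l_i\}$ summand accumulates into a $\tfrac{T}{2}\max\{l_i\}$ contribution. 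Adding $\mathbb{E}\{L(\tilde{x}^{(0)})-L(x^*)\}\leq\Theta$ to both sides produces the exact analogue of (46) augmented by $\tfrac{T}{2}\max\{l_i\}$.

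Next I would substitute the closed-form noise moments from (47), namely $\mathbb{E}\{\|n\|\}=\Delta f\, T c\sqrt{2N/\pi}/\epsilon$ and $\mathbb{E}\{\|n\|^2\}=\Delta f^{2}T^{2}c^{2}N/\epsilon^{2}$, together with $\Delta f=2B\max\{p_i\}/\max\{m_i\}$. Multiplying by the outer $T$ from the recursion turns the $\lambda_1'$ term into an $O(T^{2}/\epsilon)$ contribution and the $\lambda_0'$ term into an $O(T^{3}/\epsilon^{2})$ contribution. Grouping the $T^1$ coefficient $\lambda_2'\beta^{2}$ with the $\tfrac{1}{2}\max\{l_i\}$ term gives $k_2'=\lambda_2'\beta^{2}+\tfrac{1}{2}\max\{l_i\}$, while $k_1'$ and $k_0'$ inherit the same structure as $k_1,k_0$ in Theorem 2 with the primed $\lambda$'s. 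This yields (67) as stated.

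The main obstacle is not algebraic but a subtle uniformity issue: $A'$ defined in (54) depends on $t$ and on the current iterate through the ratio $\sum_i\alpha_i^{(t)}\|\nabla l_i(x^{(t)})\|/\|\nabla L(x^{(t)})\|$, so promoting $\lambda_2'$ and $\lambda_1'$ to genuine constants requires a single uniform upper bound on $A'$ valid across all iterations; similarly $\max\{l_i\}$ must be treated as a uniform envelope on $l_i(\tilde{x}^{(t)})$ over all $i$ and $t$. Both of these are secured by the constraints $|\alpha_i^{(t)}|\leq 1$ and $\sum_i\alpha_i^{(t)}=0$ from (49) combined with the convexity and $\rho$-Lipschitz smoothness in Assumption 1 (the latter also giving $\|\nabla l_i\|$ bounded on the bounded iterate region implied by clipping). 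Once these uniform bounds are in hand, the remainder is a direct recursion identical in form to (45)--(48) of Theorem 2, and the proof closes.
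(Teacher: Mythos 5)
Your proposal is correct and follows essentially the same route as the paper, which itself gives only a one-line proof stating that the result ``can be easily extended from the proof for Theorem 2 and using Lemma 3''; you carry out exactly that extension, correctly telescoping the extra $\tfrac{1}{2}\max\{l_i\}$ term into $k_2'$. Your added remark on the need for a uniform bound on $A'$ across iterations is a fair point that the paper itself glosses over, but it does not change the argument.
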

\begin{proof}
The proof can be easily extended from the proof for Theorem
$2$ and using lemma $3$.
\end{proof}

\section{Simulation Results}
In this section we evaluate our approach against different privacy budgets and impact factors. We present four scenarios to study the effect of noise and impact factors on convergence bound and accuracy of the models.

\subsection{Experimental Setting}
We evaluate our approach on the real-world Modified National Institute of Standards and Technology (MNIST) dataset \cite{ref36}. MNIST is a widely used dataset for handwritten digit identification which is consisted of 60000 training and 10000 testing samples. We use a multi-layer perceptron (MLP) neural network in local clients and the model weights are communicated with the server for aggregation at each cycle. The designed MLP model classifies the input images using a ReLU activation function in the hidden layer and softmax of 10 classes in the output layer.  To proceed the SGD algorithm in the local optimizers, we set the learning rate equal to $0.02$.

We stablish our evaluation using four scenarios listed in Table~\ref{tab1}. A small randomly chosen subset of the MNIST is distributed between the clients non-identically in each scenario between $60$ clients. The dataset is purposely reduced in size to avoid overfitting. The personalized DP noise is injected in both the client-side and server-side, and the affect of non-identical impact factors during the aggregation process is checked for each scenario. We set $\delta= 0.01$ for the privacy budget, and choose different protection levels ($\epsilon$) throughout this experiment for $30$ global iterations. We further discuss each scenario in the following section.

\begin{table}
\begin{center}
\caption{Simulation Scenarios}
\label{tab1}
\begin{tabular}{| c | c | l |}
\hline
 & number of & \\Scenario &  clients ($N$) & Description \\
\hline
&  & Part $1:$ $20$ clients with severe noisy data\\
1& $60$ & Part $2:$ $20$ clients with moderate noisy data\\
& & Part $3:$ $20$ clients without noise\\
\hline
&  & Part $1:$ $20$ clients with severe noisy data\\
2& $60$ & Part $2:$ $35$ clients with slight noisy data\\
& & Part $3:$ $5$ clients without noise\\ 
\hline
&  & Part $1:$ $20$ clients with $50$ samples\\
3& $60$ & Part $2:$ $20$ clients with $120$ samples\\
& & Part $3:$ $20$ clients with $271$ samples\\ 
\hline 
&  & $t \leqslant 10$: \\
4 & $60$& \, Part $1:$ $20$ clients with severe noisy data\\
&  & \, Part $2:$ $20$ clients with moderate noisy data\\
& & \, Part $3:$ $20$ clients with slight noisy data\\

&  & $10 <t \leqslant 30 $ : \\
&  & \, Part $1:$ $20$ clients with slight noisy data\\
&  & \, Part $2:$ $20$ clients with moderate noisy data\\
&  & \, Part $3:$ $20$ clients with severe noisy data\\

\hline
\end{tabular}
\end{center}
\end{table}

\subsection{Numerical Results}
After distributing the dataset between $60$ clients, we randomly divide clients into three parts. In the following items, the details of each scenario are presented, respectively.
\begin{enumerate}
\item \textit{scenario $1$ }: In the first scenario, we apply the presented privacy protection scheme on clients with heterogeneous data quality. Clients are identical in terms of dataset size, or $m=150$ for all parts, and we deliberately add salt-and-pepper noise with various densities to each part to change data quality between the clients. We first set different impact factors in the non-private mode for comparison. Fig.~\ref{fig:NP1} depicts the importance of impact factors in FL model performances. As it is shown, equal $p_i$ (the green curve) leads to the worst accuracy. The sequence of numbers dedicated to each curve in Fig.~\ref{fig:NP1} represents the relations between impact factors of the three parts. For instance, ``0-1-2" means that we have set $p_i$ equal to $0,\, \frac{1}{N}, \text{and} \frac{2}{N}$ for the first, second, and third part, respectively.

Considering ``0-1-2" as the optimal ratio between the impact factors, Fig.~\ref{fig:DP1} compares the results after applying DP. Here, Gaussian noise is injected using \eqref{eq6} and \eqref{eq11} for protection levels $\epsilon=5$ and $\epsilon=20$ with non-identical impacts. As expected from \eqref{eq:44}, values of the loss function decrease for higher privacy protection levels. In this experiment, we also compare the results when identical $p_i$ is adopted for $\epsilon=20$. As shown in Fig.~\ref{fig:DP1}, the model performance using identical $p_i$ is even worse than a higher protection level $\epsilon=5$, when personalized DP is used.

\begin{figure}[!b]
\begin{center}
\begin{minipage}[c]{1\linewidth}
	\includegraphics[width=\textwidth]{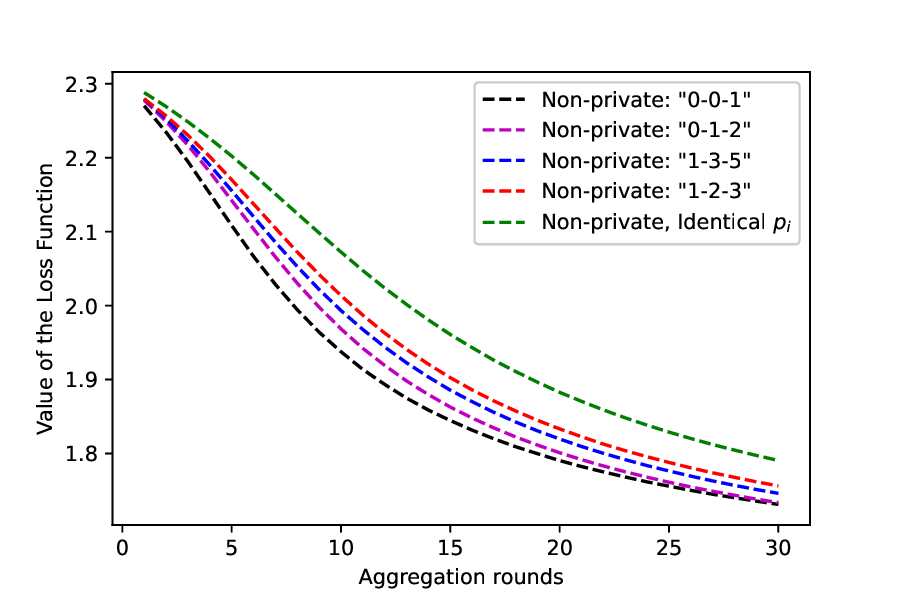}
	\caption{The comparison of loss function values in the non-private mode for five different ways of assigning impact factors to clients of each part in the first scenario.}\label{fig:NP1}
\end{minipage}
\begin{minipage}[c]{1\linewidth}
	\includegraphics[width=\textwidth]{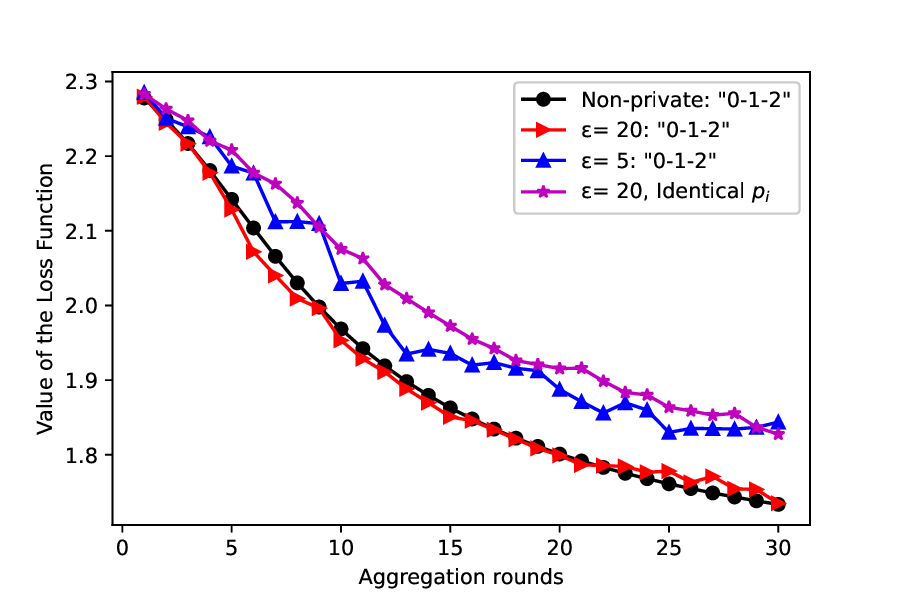}
	\caption{The comparison of loss function values for protection levels $\epsilon=5$ and $\epsilon=20$ when  impact factors of the first, second, and third parts in the first scenario are set as $0$, $\frac{1}{60}$, and $\frac{1}{30}$, respectively. The loss value of the model with identical impacts is also presented for $\epsilon=20$ as a reference. }\label{fig:DP1}
\end{minipage}
\end{center}
\end{figure}

\item \textit{scenario $2$ }: In this scenario, we go further and change clients' distributions in addition to data quality. Hence, we divide clients into three parts of  $20$, $35$, and $5$ clients each, and deliberately add salt-and-pepper noise to them based on densities in Table~\ref{tab1}. Fig.~\ref{fig:NP2} depicts the loss function values in the non-private mode using different impact factor assignments. As shown, equal $p_i$ cannot be a right choice in the presence of heterogeneities. In this case, weighting clients is based on a balance between involving a sufficient number of clients in learning and exploiting the most accurate samples. The red curve related to ``0-2-1''  impact assignment sets impact factors of the first, second and third parts equal to $0$, $\frac{8}{5N}$, and $\frac{4}{5N}$, respectively. 

Considering ``0-2-1" as the basis ratio between the impact factors, model performances in the private mode is compared in Fig.~\ref{fig:DP2}. It is clear from Fig.~\ref{fig:DP2} that a wise choice of $p_i$ significantly improves model accuracy in distributed architectures, especially while using DP.
 
\begin{figure}[!b]
\begin{center}
\begin{minipage}[c]{1\linewidth}
	\includegraphics[width=\textwidth]{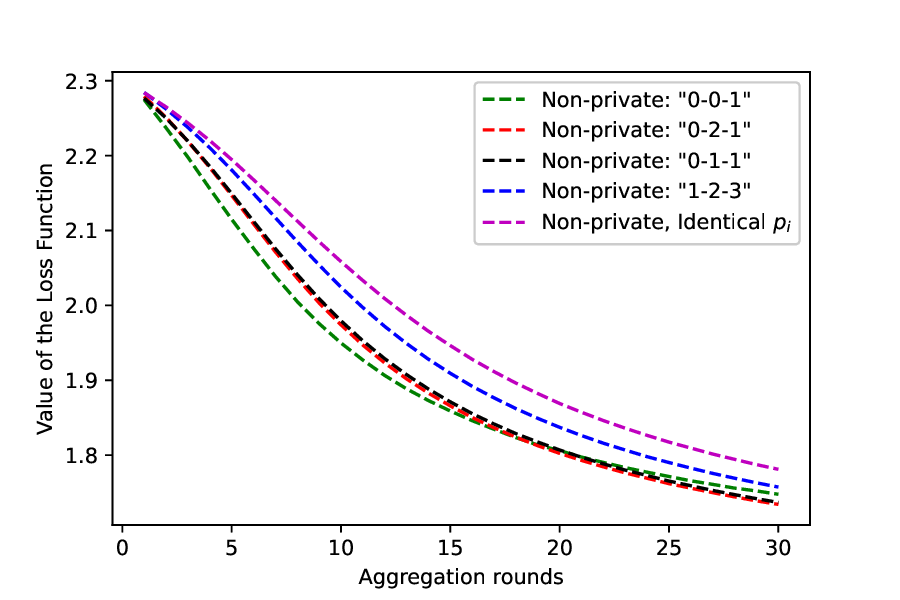}
	\caption{The comparison of loss function values in the non-private mode for five different ways of assigning impact factors to clients of each part in the second scenario.}\label{fig:NP2}
\end{minipage}
\begin{minipage}[c]{1\linewidth}
	\includegraphics[width=\textwidth]{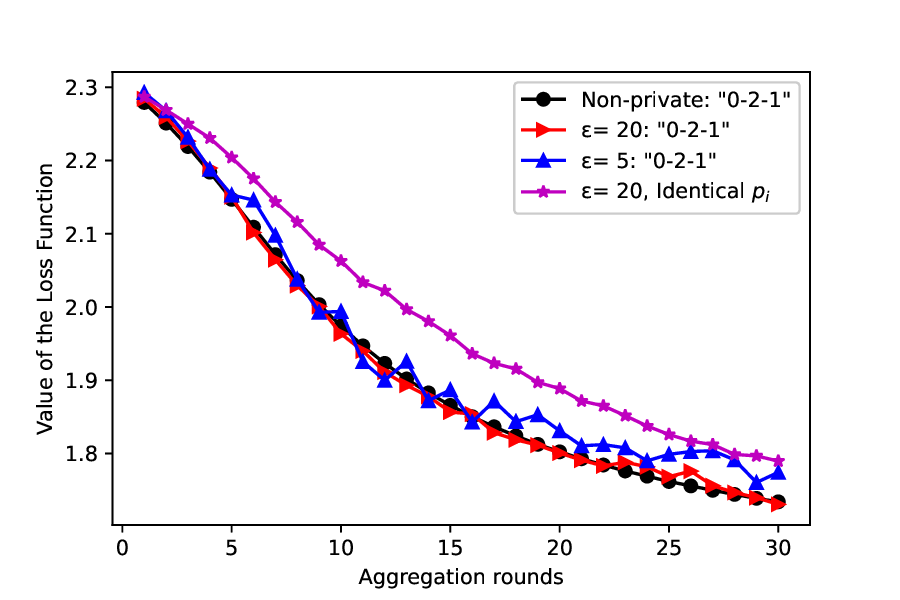}
	\caption{The comparison of loss function values for protection levels $\epsilon=5$ and $\epsilon=20$ when  impact factors of the first, second, and third parts in the second scenario are set as $0$, $\frac{2}{75}$, and $\frac{1}{75}$, respectively. The loss value of the model with identical impacts is also presented for $\epsilon=20$ as a reference. }\label{fig:DP2}
\end{minipage}
\end{center}
\end{figure}

\item \textit{scenario $3$ }: The third scenario is designed to see the effect of the dataset size and impact factors on the convergence performance of FL model. As given in Table~\ref{tab1}, we set $m$ equal to $50$, $120$, and $271$ in clients of part $1$, $2$, and $3$ respectively, and as depicted in Fig.~\ref{fig:NP3}, setting  higher weights to the third part (clients with larger datasets) yields to better results. The common method defining impact factors based on dataset size, or setting $p_i= \frac{m_i}{m}$, is probably developed from this assumption. But it can adversely affect the global model accuracy in heterogeneous structures.

Reducing clients' training samples increases sensitivity and the amount of noise required for preserving DP. Fig.~\ref{fig:DP3} compares the results after Gaussian noise is added for protection levels $\epsilon=5$ and $\epsilon=20$. As it is shown for $\epsilon= 20$, the accuracy of the model when identical $p_i$ is set for all parts is still worse than assigning impact factors proportional to $1$, $3$, and $5$ for part $1$, $2$, and $3$, respectively. This result has been achieved in spite of the fact that more noise is added due to a higher sensitivity in the latter experiment.

\begin{figure}[!b]
\begin{center}
\begin{minipage}[c]{1\linewidth}
	\includegraphics[width=\textwidth]{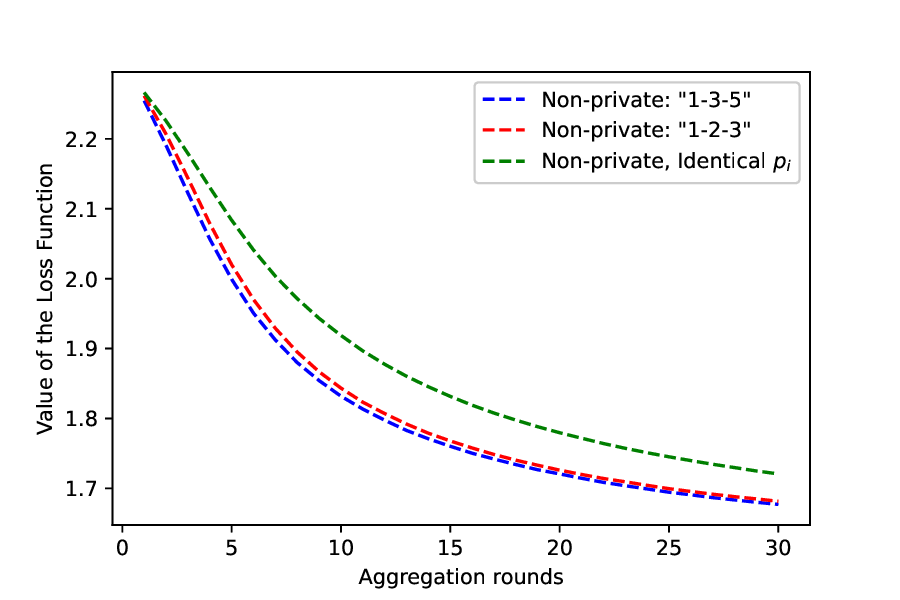}
	\caption{The comparison of loss function values in the non-private mode for three different ways of assigning impact factors to clients of each part in the third scenario.}\label{fig:NP3}
\end{minipage}
\begin{minipage}[c]{1\linewidth}
	\includegraphics[width=\textwidth]{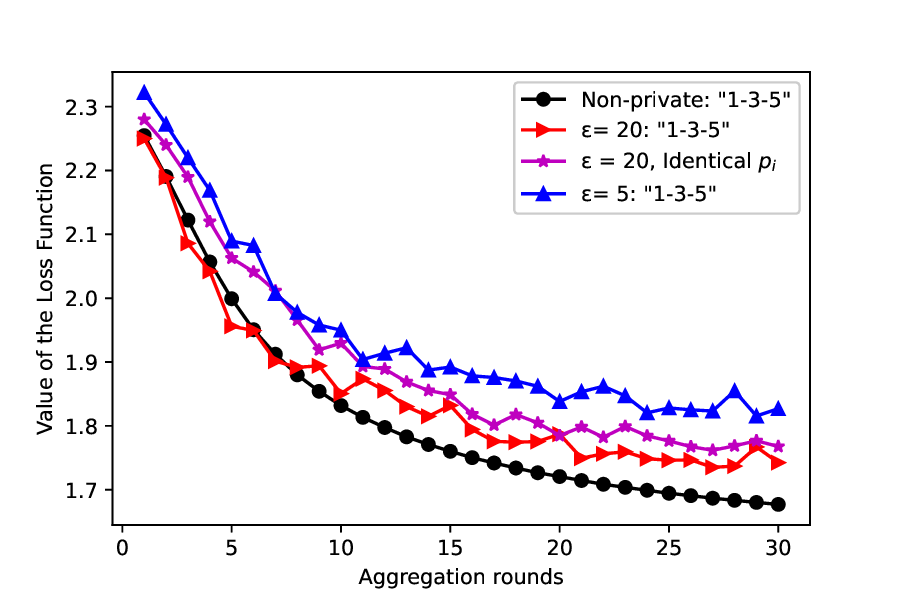}
	\caption{The comparison of loss function values for protection levels $\epsilon=5$ and $\epsilon=20$ when  impact factors of the first, second, and third parts in the third scenario are set as $\frac{1}{180}$, $\frac{1}{60}$, and $\frac{1}{36}$, respectively. The loss value of the model with identical impacts is also presented for $\epsilon=20$ as a reference. }\label{fig:DP3}
\end{minipage}
\end{center}
\end{figure}

\item \textit{scenario $4$ }: We have designed the forth scenario to compare convergence properties when adaptive impact factors are used. As presented in Table~\ref{tab1}, the quality of clients' datasets are changed after the $10$-th aggregation round, and therefore, $p_i$ of each part should be changed to obtain the best model performance. Fig.~\ref{fig:NP4} depicts the results of three types of impact factor assignments. The green curve which belongs to the experiment giving the heaviest weight to slight noisy datasets yields to the fastest and the most accurate result. Considering the green curve as the basis for the private mode, we compare loss function values for protection levels of $\epsilon=5$ and $\epsilon=20$ in Fig.~\ref{fig:DP4}.

\begin{figure}[!b]
\begin{center}
\begin{minipage}[c]{1\linewidth}
	\includegraphics[width=\textwidth]{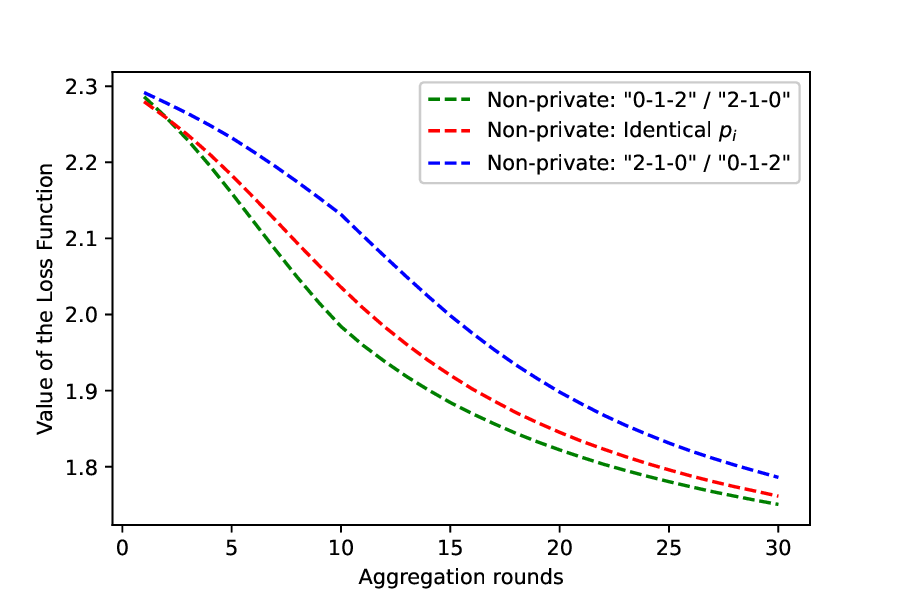}
	\caption{The comparison of loss function values in the non-private mode for three different ways of assigning impact factors to clients of each part in the forth scenario.}\label{fig:NP4}
\end{minipage}
\begin{minipage}[c]{1\linewidth}
	\includegraphics[width=\textwidth]{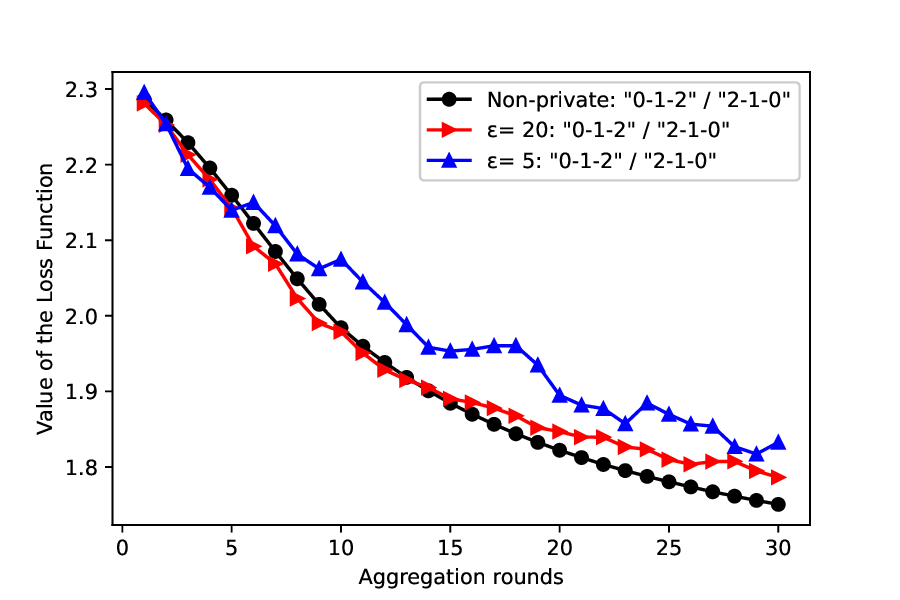}
	\caption{The comparison of loss function values for protection levels $\epsilon=5$ and $\epsilon=20$ when  impact factors of the first, second, and third parts in the forth scenario are respectively set as $0$, $\frac{1}{60}$, and $\frac{1}{30}$ for $t \leqslant 10$, and  $\frac{1}{30}$, $\frac{1}{60}$, and $0$ for $10 < t \leqslant 30$. }\label{fig:DP4}
\end{minipage}
\end{center}
\end{figure}

\end{enumerate}

\section{Conclusion}
In this paper, we presented a personalized privacy preserving approach in federated learning models. Considering the systems and statistical heterogeneities in distributed architectures, we have first focused on the roles that impact factors play in obtaining the best model performance. We further clarified that the impacts are not necessarily fixed during training the global model and undergo changes. Hence, the influence each client has on learning can increase, decrease, or become zero while $\sum_{i=1} ^ N p_i =1$ applies. Then, we have proposed the requirements for preserving $(\epsilon,\delta)$-DP in both clients and the server, when personalized aggregation is applied. We have developed the convergence analysis of the proposed scheme for both fixed and time-varying impact factors. Our simulation results on four scenarios helps understanding the importance of assigning non-identical impact factors to compensate the weaknesses of local datasets, clients, links, and the server.

%\section{References}

\bibliographystyle{IEEEtran}
\bibliography{IEEEabrv,mybibfile}

\newpage

\vspace{11pt}

\vspace{11pt}

% \bf{If you will not include a photo:}\vspace{-33pt}
% \begin{IEEEbiographynophoto}{John Doe}
% Use $\backslash${\tt{begin\{IEEEbiographynophoto\}}} and the author name as the argument followed by the biography text.
% \end{IEEEbiographynophoto}

\vfill

\end{document}